\newcommand{\prob}{\mathbb{P}}
\newcommand{\Prob}[1]{\prob\left(#1\right)}
\newcommand{\expec}{\mathbb{E}}
\newcommand{\Exp}[1]{\expec\left[#1\right]}
\newcommand{\Var}[1]{\textup{Var}\left(#1\right)}
\newcommand{\wlim}{\ensuremath{\stackrel{W}{\longrightarrow}}}
\newcommand{\ind}[1]{\mathbbm{1}_{\left\{#1\right\}}}
\newcommand{\sss}[1]{\scriptscriptstyle{#1}}
\newcommand\abs[1]{\left|#1\right|}
\newcommand{\me}{\textup{e}}
\newcommand{\dd}{\textup{d}}
\newcommand{\mm}{{\sss{(-)}}}
\newcommand{\pp}{{\sss{(+)}}}
\newtheorem{theorem}{Theorem}
\newtheorem{lemma}[theorem]{Lemma}
\theoremstyle{definition}
\newtheorem{example}{Example}
\title{Efficient inference in stochastic block models with vertex labels}
\begin{document}
	\author{Clara Stegehuis \thanks{C. Stegehuis is with Eindhoven University of Technology.} and Laurent Massouli\'e	\thanks{L. Massouli\'e is with Microsoft Research - INRIA joint centre.}}

	\maketitle
	\begin{abstract}
	We study the stochastic block model with two communities where vertices contain side information in the form of a vertex label. These vertex labels may have arbitrary label distributions, depending on the community memberships. 
	We analyze a linearized version of the popular belief propagation algorithm. We show that this algorithm achieves the highest accuracy possible whenever a certain function of the network parameters has a unique fixed point. Whenever this function has multiple fixed points, the belief propagation algorithm may not perform optimally. We show that increasing the information in the vertex labels may reduce the number of fixed points and hence lead to optimality of belief propagation.
	\end{abstract}
	
	\section{Introduction}
	Many real-world networks contain community structures: groups of densely connected nodes. Finding these group structures based on the connectivity matrix of the network is a problem of interest, and several algorithms have been developed to extract these community structures, see~\cite{fortunato2010} for an overview. In many applications however, the network contains more information than just the connectivity matrix. For example, the edges can be weighted, or the vertices can carry information. This extra network information may help in extracting the community structure of the network. In this paper, we study the setting where the vertices have labels, which arises in particular when vertices can be distinguished into different types. For example, in social networks vertex types may include the interests of a person, the age of a person or the city a person lives in. We investigate how the knowledge of these vertex types helps us in identifying the community structures. 
	
	We focus on the stochastic block model (SBM), a popular random graph model to analyze community detection problems~\cite{holland1983,decelle2011b,snijders1997}. In the simplest case, the stochastic block model generates a random graph with 2 communities. First, the vertex set is partitioned into two communities. Then, two vertices in communities $i$ and $j$ are connected with probability $M_{ij}$ for some connection probability matrix $M$. To include the vertex labels, we then attach a label to every vertex, where the label distribution depends on the community membership of the vertex. 
	
	In the stochastic block model with two equally sized communities, it is not always possible to infer the community structure from the connectivity matrix. A phase transition occurs at the so-called Kesten-Stigum threshold $\lambda_2^2d=1$, where $\lambda_2$ is the second largest eigenvalue of a matrix related to the connectivity matrix and $d$ the average degree in the network. Underneath the Kesten-Stigum threshold, no algorithm is able to infer the community memberships better than a random guess, even though a community structure may be present~\cite{mossel2014}. In this setting, it is even impossible to discriminate between a graph generated by the stochastic block model and an Erd\H{o}s R\'enyi random graph with the same average degree, even though a community structure is present~\cite{mossel2012}. Above the Kesten-Stigum threshold, the communities can be efficiently reconstructed~\cite{massoulie2014,mossel2017}.  
	
	A popular algorithm for community detection is belief propagation~\cite{decelle2011} (BP). This algorithm starts with initial beliefs on the community memberships, and iterates until these beliefs converge to a fixed point. Above the Kesten-Stigum threshold, a fixed point that is correlated with the true community memberships is believed to be the only stable fixed point, so that the algorithm always converges to that fixed point. Underneath the Kesten-Stigum threshold, the fixed point correlated with the true community memberships becomes unstable and the belief propagation algorithm will in general not result in a partition that is correlated with the true community memberships. However, when the belief propagation algorithm is initialized with the real community memberships, there is still a regime of the parameters where the fixed point correlated with the true community spins can be distinguished from the other fixed points. In this regime, community detection is believed to be possible (for example by exhaustive search of the state space), but not in polynomial time.
	
When the two communities are equally sized (the symmetric stochastic block model), the phase where community detection may only be possible by non-polynomial time algorithms is not present~\cite{mossel2017,massoulie2014,mossel2014}. In the case of unbalanced communities (the asymmetric stochastic block model) it has been shown that it is possible to infer the community structure better than random guessing even below the Kesten-Stigum threshold~\cite{neeman2014}. Thus, according to the conjecture of~\cite{decelle2011}, a regime where community detection is possible but not in polynomial time may be present in the case of two unbalanced communities.
	
	In this paper, we investigate the performance of the belief propagation algorithm on the asymmetric stochastic block model when vertices contain side information. We are interested in the fraction of correctly inferred community labels by the algorithm, and we say that the algorithm performs optimally if it achieves the highest possible fraction of correctly inferred community labels among all algorithms.
	Some special cases of stochastic block models with side information have already been studied. One such case is the setting where a fraction $\beta$ of the vertices reveals its true group membership~\cite{zhang2014}. Typically, it is assumed that the fraction of vertices that reveal their true membership tends to zero when the graph becomes large~\cite{zhang2014,caltagirone2017}. In this setting, a variant of the belief propagation algorithm including the vertex labels seems to perform optimally in a symmetric stochastic block model~\cite{zhang2014}, but may not perform optimally if the communities are not of equal size~\cite{caltagirone2017}.
	Another special case of the label distribution is when the observed labels are a noisy version of the community memberships, where a fraction of $\beta$ vertices receives the label corresponding to their community, and a fraction of $1-\beta$ vertices receives the label corresponding to the other community. It was conjectured in~\cite{mossel2016} that for this label distribution the belief propagation algorithm always performs optimally in the symmetric stochastic block model.
	
	\paragraph*{Our contribution}
	We focus on asymmetric stochastic block models with arbitrary label distributions, generalizing the above examples.
	\begin{itemize}
		\item 
	We provide an algorithm that uses both the label distribution and the network connectivity matrix to estimate the group memberships. 
	\item 
	The algorithm is a local algorithm, which means that it only depends on a finite neighborhood of each vertex. In particular, this implies that the running time of the algorithm is linear, allowing it to be used on large networks. The algorithm is a variant of the belief propagation algorithm, and a generalization of the algorithms provided in~\cite{mossel2016,caltagirone2017} to include arbitrary label distributions and an asymmetric stochastic block model. 
	\item 
	In a regime where the average vertex degrees are large, we obtain an expression for the probability that the community of a vertex is identified correctly. 
	Furthermore, we show that this algorithm performs optimally if a function of the network parameters has a unique fixed point. 
	\item 
	Similarly to belief propagation without labels, we show that when multiple fixed points exist, the belief propagation algorithm may not converge to the fixed point containing the most information about the community structure. This phenomenon was previously observed in a setting where the information carried by the vertex labels tends to zero in the large graph limit~\cite{caltagirone2017}, but we show that this may also happen if the information carried by the vertex labels does not tend to zero. The existence of multiple fixed points either indicates that the optimal fixed point can still be found by an exhaustive search of the partition space or it may indicate that no algorithm is able to detect the community partition. 
	\item 
	We show that increasing the correlation between the vertex covariate and the community structure changes the number of fixed points of the BP algorithm for a specific example of node covariates. In particular it is possible that the BP algorithm does not converge to the fixed point that is the most informative on the vertex spins if the correlation between the vertex covariates and the vertex spins is small, but that BP does converge to this fixed point if the vertex labels contain more information on the vertex spins. This shows that including node covariates for community detection is helpful, and that it may significantly improve the performance of polynomial time algorithms for community detection. 
	\end{itemize}

	We start by showing with an example that in some cases vertex labels allow us to efficiently detect communities even below the Kesten-Stigum threshold.

\begin{example}\label{ex:SBM}
	We now present a simple example where it is not possible to detect communities using the connectivity matrix only, but where it is possible when we also use knowledge of the vertex labels. Consider an SBM with four communities of size $n/4$, 1,2,3 and 4, where the probability that a vertex in community $i$ connects to a vertex in community $j$ is given by $M_{ij}$. Here $M$ is the connection probability matrix defined as
	\begin{equation*}
		M=\frac{1}{n}\begin{bmatrix}
			2a & 2b & a+b & a+b\\
			2b & 2a & a+b & a+b\\
			a+b & a+b & 2a & 2b\\
			a+b & a+b & 2b & 2a
		\end{bmatrix}.
	\end{equation*}
	The nonzero eigenvalues of this matrix are given by $2(a-b)/n$ (appears with multiplicity two) and $4(a+b)/n$. Community detection in this example is not able to obtain a partition that is better than a random guess below the Kesten-Stigum threshold, which is
	\begin{equation*}
		(a-b)^2<4( a+b).
	\end{equation*}
	
	Now suppose that all vertices in communities 1 and 2 have label $\ell_1$, and all vertices in communities 3 and 4 have labels $\ell_2$. Then, there are $n/2$ vertices with label $\ell_1$ and $n/2$ with label $\ell_2$. Thus, using the labels of the communities alone we cannot distinguish between vertices in community 1 and 2 or between vertices in communities 3 and 4. Thus, using the labels only, we can only correctly infer at most half of the community spins. 
	
	Now suppose we split the network into two smaller networks based on the label of the vertices. Then, we obtain two small networks with connection probability matrices 
	\begin{equation*}
	\frac{1}{n}	\begin{bmatrix}
			2a &2b\\
			2b& 2a
		\end{bmatrix}.
	\end{equation*}
	Thus, community detection can achieve a partition that is better than a random guess in these two networks as long as $(a-b)^2>2(a+b)$, i.e., above the corresponding Kesten-Stigum threshold. Thus, in the regime
	\begin{equation*}
		2(a+b)<(a-b)^2<4(a+b)
	\end{equation*}
	it is impossible to infer the community structure better than a random guess without information about the vertex labels, or when using only the vertex labels. However, when using the vertex label information combined with the underlying graph structure, one can infer the community structure of strictly more than half of the vertices correctly. 
	\end{example}
%

	\paragraph*{Notation}
	We say that a sequence of events $(\mathcal{E}_n)_{n\geq 1}$ happens with high probability (w.h.p.) if $\lim_{n\to\infty}\Prob{\mathcal{E}_n}=1$. Furthermore, we write $f(n)=o(g(n))$ if $\lim_{n\to\infty}f(n)/g(n)=0$, and $f(n)=O(g(n))$ if $|f(n)|/g(n)$ is uniformly bounded, where $(g(n))_{n\geq 1}$ is nonnegative. 
	
	\subsection{Model}
	Let $G$ be a labeled SBM with two communities. That is, every vertex $i$ has a spin $\sigma_i\in\{+,-\}$, where $\Prob{\sigma_i=+}=p$ independently for all $i$. 
	Each pair of nodes $(i,j)$ is connected with probability $da/n$ if $\sigma_i=\sigma_j=+$, with probability $dc/n$ if $\sigma_i=\sigma_j=-$, and with probability $db/n$ if $\sigma_i\neq \sigma_j$, so that $d$ controls the average degree in the graph. When the communities do not have equal degrees, partitioning vertices based on their degrees already results in a community detection algorithm that correctly identifies the spin of a vertex with probability at strictly larger than $1/2$~\cite{caltagirone2017}. We therefore assume that all vertices have the same average degree, that is
	\begin{equation}\label{eq:avgdeg}
	pa+(1-p)b = pb+(1-p)c=1,
	\end{equation}
	so that the average degree is $d$.

	Beside the vertex spins, every vertex has a label attached to it. Let $\mathcal{L}$ be a finite set of labels. Then vertices in community + have label $\ell\in\mathcal{L}$ with probability $\mu(\ell)$, and vertices in community~- have label $\ell$ with probability $\nu(\ell)$. 
	
	For an estimator of the community spins $T$, let $T_i(G)\in\{ +,-\}$ be the estimated label of vertex $i$ in graph $G$ under estimator $T$. We then define the success probability of an estimator $T$ as
	\begin{equation}\label{eq:psucc}
	\begin{aligned}[b]
	P_{succ}(T)=\frac{1}{n}\sum_{i=1}^{n}& \big(\Prob{T_i(G)=+\mid \sigma_i=+}\\
	& +\Prob{T_i(G)=-\mid\sigma_i=-}-1\big),
	\end{aligned}
	\end{equation}
	where the subtraction of -1 is to give zero performance measure to estimators that do not depend on the graph structure $G$. 
	Let $s_0$ be a uniformly chosen vertex. By~\cite[Proposition 3]{caltagirone2017},
	\begin{equation}\label{eq:dtvpsucc}
	P_{succ}(T)=d_{TV}(P_+,P_-),
	\end{equation}
	where $P_+$ and $P_-$ are the conditional distributions of $G$, given that $\sigma_{s_0}=+$ and $\sigma_{s_0}=-$ respectively and $d_{TV}$ denotes the total variation distance. 
	We say that the community detection problem is solvable if the estimator $T^{opt}$ maximizing~\eqref{eq:psucc} satisfies
	\begin{equation}\label{eq:popt}
		\liminf_{n\to\infty} P_{succ}(T^{opt})>0.
	\end{equation}
	Note that the estimator $T_1$ that estimates community one if $\mu(\ell)>\nu(\ell)$ and community two otherwise has a success probability of
	\begin{equation}\label{eq:onlylabels}
		\begin{aligned}[b]
		P_{succ}(T_1)& =\sum_{\ell:\mu(\ell)>\nu(\ell)}\mu(\ell)+\sum_{\ell:\nu(\ell)\leq \mu(\ell)}\nu(\ell)-1\\
		& =\sum_{\ell}\max(\mu(\ell),\nu(\ell))-1\\
		& =\sum_{\ell}\left(\max(\mu(\ell),\nu(\ell))-\tfrac{1}{2}(\mu(\ell)+\nu(\ell))\right)\\
		&=\tfrac{1}{2}\sum_{\ell}\abs{\mu(\ell)-\nu(\ell)}=d_{TV}(\mu,\nu)
		\end{aligned}
	\end{equation} 
	Thus, the community detection problem is always solvable when $d_{TV}(\mu,\nu)>0$. Furthermore, an estimator $T$ performs better when combining the network data and the vertex labels than when only using the vertex labels if
	\begin{equation*}
	P_{succ}(T)>d_{TV}(\mu,\nu).
	\end{equation*}
	
	\subsection{Labeled Galton-Watson trees}\label{sec:GW}
	A widely used algorithm to detect communities in the stochastic block model is Belief Propagation~\cite{decelle2011}. The algorithm computes the belief that a specific vertex $i$ belongs to community $+$, given the beliefs of the other vertices. Because the stochastic block model is locally tree-like, we study a Galton-Watson tree that behaves similar to the labeled stochastic block model. 
	We denote this labeled Galton-Watson tree by $(\mathcal{T},s_0,\sigma,L)$, where $\mathcal{T}$ is a Galton-Watson tree rooted at $s_0$ with a Poisson$(d)$ offspring distribution. Each vertex $i$ in the tree has two covariates, $\sigma_i\in\{+,-\}$ and $L_i\in\mathcal{L}$. Here $\sigma_i$ denotes the spin of the node, and $L_i$ denotes the vertex label of node $i$. The root $s_0$ has spin $\sigma_{s_0}=+$ with probability $p$ and spin - with probability $1-p$. Given the label $\sigma_P$ of the parent of node $i$, the probability that $\sigma_i=\sigma_P$ is $pa/d$ if $\sigma_i=+$ and $(1-p)c/d$ if $\sigma_i=-$.
	Given $\sigma_i=+$, $L_i=\ell$ with probability $\mu(\ell)$, whereas given $\sigma_i=-$, $L_i=\ell$ with probability $\nu(\ell)$. 
	
	Let $\mathcal{T}_r^{(s_0,L)}$ denote such a tree of depth $r$ rooted at $s_0$, where the labels $L_i$ are observed, but the spins of the nodes are not observed. Let $\partial_i$ denote the set of children of vertex $i$. Then Bayes' rule together with~\eqref{eq:avgdeg} yields
	\begin{equation*}
\begin{aligned}[b]
	& \frac{\Prob{\sigma_{s_0}=+\mid\mathcal{T}_r^{(s_0,L)}}}{\Prob{\sigma_{s_0}=-\mid\mathcal{T}_r^{(s_0,L)}}}
	=\frac{\mu(L_{s_0})p}{\nu(L_{s_0})(1-p)}\\
	& \times \frac{\prod_{j\in\partial_{s_0}}a\Prob{\sigma_j=+\mid \mathcal{T}_{r-1}^{(j,L)}}+b\Prob{\sigma_j=-\mid \mathcal{T}_{r-1}^{(j,L)}}}{\prod_{j\in\partial_{s_0}}b\Prob{\sigma_j=+\mid \mathcal{T}_{r-1}^{(j,L)}}+c\Prob{\sigma_j=-\mid \mathcal{T}_{r-1}^{(j,L)}}}.
\end{aligned}
	\end{equation*}
	 If we define
	\begin{equation}
	\xi_r^{(s_0)}=\log\left(\frac{\Prob{\sigma_{s_0}=+\mid\mathcal{T}_r^{(s_0,L)}}}{\Prob{\sigma_{s_0}=-\mid\mathcal{T}_r^{(s_0,L)}}}\right),
	\end{equation}
	we can write the recursion
	\begin{equation}\label{eq:xirecurs}
	\xi_r^{(s_0)}=h(L_{s_0})+w+\sum_{j\in\partial_{s_0}}f(\xi_{r-1}^{(j)}),
	\end{equation}
	with $w=\log(p/(1-p))$,
	\begin{equation}
	f(x)=\log\left(\frac{a\me^x+b}{b\me^x+c}\right)
	\end{equation}
	and 
	\begin{equation}
	h(\ell )=\log(\mu(\ell)/\nu(\ell)).
	\end{equation}
	
	\subsection{Local algorithms}
	 A local algorithm is an algorithm that bases the estimate of the spin of a vertex $i$ only on the neighborhood of vertex $i$ of radius $t$. In general, local algorithms are not able to obtain a success probability~\eqref{eq:psucc} larger than zero in the stochastic block model~\cite{kanade2016}, so that an estimator based on a local algorithm does not satisfy~\eqref{eq:popt}. However, when a vanishing fraction of vertices reveals their labels, a local algorithm is able to achieve the maximum possible success probability~\eqref{eq:psucc} when the parameters of the stochastic block model are above the Kesten Stigum threshold~\cite{kanade2016}. 
	
	\subsection{Local, linearized belief Propagation}
	The specific local algorithm we consider is a version of the widely used belief propagation~\cite{decelle2011}. Algorithm~\ref{alg:BPlocal} uses the observed labels to initialize the belief propagation, and then updates the beliefs as in~\eqref{eq:xirecurs}. Here $\mathcal{N}_i$ denotes the neighbors of vertex $i$. Since the algorithm only uses the neighborhood of vertex $i$ up to depth $t$, it is indeed a local algorithm. Note that Algorithm~\ref{alg:BPlocal} does require knowledge of all parameters of the stochastic block model: $p,a,b,c,d$ as well as the label distributions $\mu$ and $\nu$. Furthermore, if the underlying graph $G$ is a tree, then~\eqref{eq:ri} is the same as~\eqref{eq:xirecurs}. 
	
	\begin{algorithm}[tb]
		Set $R_{i\to j}^0=\log(\mu(L_i)/\nu(L_i))$ for all $i\in[n]$ and $j\in\mathcal{N}_i$.\\
		\For{$k=1,\dots,t-1$}{
			For all $(i,j)\in E$ let 
	\begin{equation}
	R_{i\to j}^k=h(L_i)+w+\sum_{v\in\mathcal{N}_i\setminus\{j\}}f(R_{v\to i}^{k-1}).
	\end{equation}	
	}
For all $i\in[n]$ set
\begin{equation}\label{eq:ri}
R_i^t=h(L_i)+w+\sum_{v\in\mathcal{N}_i}f(R_{v\to i}^{t-1}).
\end{equation}
\\
For all $i$, set $T^t_{BP}(i)=+$ if $R_i^t\geq 0$, and set $T^t_{BP}(i)=-$ if $R_i^t< 0$
\caption{Local linearized belief propagation with vertex labels.}
\label{alg:BPlocal}
	\end{algorithm}

	\section{Properties of local, linearized belief propagation}
	We now consider the setting where $a,b$ and $c$ grow large. In this regime, we give specific performance guarantees on the success probability of the local belief propagation algorithm.
	Define
	\begin{equation*}
	R = \begin{bmatrix}
	pa &(1-p) b\\ pb &(1-p) c
	\end{bmatrix}.
	\end{equation*}
	We focus on the regime where $d\lambda_2^2$ is fixed,where $\lambda_2$ is the smallest eigenvalue of $R$ and define $\lambda=d\lambda_2^2$. We let the average degree $d\to\infty$. Then, $\lambda<1$ corresponds to the Kesten Stigum bound~\cite{caltagirone2017}. Furthermore, we assume that the average degree in each community is equal, so that~\eqref{eq:avgdeg} holds. Under this assumption, $\lambda_2=1-b$. Thus, if we let $b=1-\varepsilon$, then in the regime we are interested in, $d=\lambda/\varepsilon^2$. Then also
	\begin{equation}\label{eq:abc}
	a=1+\frac{1-p}{p}\varepsilon,\quad b=1-\varepsilon,\quad c=1+\frac{p}{1-p}\varepsilon.
	\end{equation}
	Define 
\begin{align}
\alpha_0&=0,\nonumber\\
\alpha_t&=G(\alpha_{t-1}),\quad t\geq 1,\label{eq:murec}
\end{align}
where
\begin{equation}\label{eq:Gmu}
G(\alpha ) = \frac{\lambda}{p^2} \Exp{\frac{1}{1-p+p\me^{U_-+\sqrt{\alpha}Z-\alpha/2}}-1}.
\end{equation}
Here $Z$ is a $\mathcal{N}(0,1)$ random variable, and $U_-$ is a random variable independent of $Z$ which takes values $\log(\mu(\ell)/\nu(\ell))$ with probability $\nu(\ell)$. 
Let 
\begin{equation}
Q(x)=\int_{x}^{\infty}\frac{1}{\sqrt{2\pi}}\me^{-y^2/2}\dd y,
\end{equation}
and let $U_+$ denote a random variable which takes values $\log(\mu(\ell)/\nu(\ell))$ with probability $\mu(\ell)$. Then the following theorem gives the success probability of Algorithm~\ref{alg:BPlocal} in terms of the function $Q$ and a fixed point of $G$ and compares it with the performance of the optimal estimator $T^{opt}$.

\begin{theorem}\label{thm:sbmlarge}
	Let $T_{BP}^t$ denote the estimator given by Algorithm~\ref{alg:BPlocal} up to depth $t$. Then, 
	\begin{equation}\label{eq:succBP}
	\begin{aligned}[b]
	& \liminf_{d\to\infty}\liminf_{n\to\infty} P_{succ}(T_{BP}^{t})\\
	& = \Exp{Q\left(\frac{U_+-\alpha_t/2}{\sqrt{\alpha_t}}\right)}+\Exp{Q\left(\frac{-U_--\alpha_t/2}{\sqrt{\alpha_t}}\right)}-1.
	\end{aligned}
	\end{equation}
Furthermore, if $G(\mu)$ has a unique fixed point, then 
\begin{equation}
\begin{aligned}[b]
& \liminf_{d\to\infty}\liminf_{n\to\infty} P_{succ}(T^{opt})\\
& =\Exp{Q\left(\frac{U_+-\alpha_\infty/2}{\sqrt{\alpha_\infty}}\right)}+\Exp{Q\left(\frac{-U_--\alpha_\infty/2}{\sqrt{\alpha_\infty}}\right)}-1,
\end{aligned}
\end{equation}
and the estimator of Algorithm~\ref{alg:BPlocal} is asymptotically optimal.
\end{theorem}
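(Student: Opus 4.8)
The plan is to push everything onto the labeled Galton--Watson tree of Section~\ref{sec:GW} by local weak convergence, and then to run a Gaussian density-evolution analysis of the recursion~\eqref{eq:xirecurs} in the large-degree limit. Since $T_{BP}^t$ only inspects the depth-$t$ neighborhood of a vertex, local tree-likeness of the SBM expresses $\liminf_{n\to\infty}P_{succ}(T_{BP}^t)$ through the tree, where by~\eqref{eq:psucc} the relevant quantity is $\Prob{\xi_t^{(s_0)}\ge 0\mid\sigma_{s_0}=+}+\Prob{\xi_t^{(s_0)}<0\mid\sigma_{s_0}=-}-1$. I would decompose $\xi_r=w+h(L_{s_0})+\zeta_r$ with $\zeta_r=\sum_{j\in\partial_{s_0}}f(\xi_{r-1}^{(j)})$, and record that conditionally on $\sigma_{s_0}$ the label $h(L_{s_0})$ (distributed as $U_+$ or $U_-$) is independent of the graph part $\zeta_r$.

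The heart of the argument is to prove, by induction on $r$, that as $d\to\infty$ with $\lambda=d\varepsilon^2$ fixed the graph part is asymptotically Gaussian, $\zeta_r\mid\sigma_{s_0}=\pm\ \sim\ \mathcal N(\pm\alpha_r/2,\alpha_r)$, with $\alpha_r$ obeying~\eqref{eq:murec}; the base case $\zeta_0=0$ gives $\alpha_0=0$. For the step I Taylor-expand $f$ using~\eqref{eq:abc}: a short computation gives $f(x)=\varepsilon\,g(x)+\tfrac{1}{2}\varepsilon^2 g_2(x)+O(\varepsilon^3)$ with $g(x)=(s(x)-p)/(p(1-p))$, $s(x)=\me^x/(1+\me^x)$, and $g_2$ expressible through $g$. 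Because $\zeta_r$ sums $\mathrm{Poisson}(d)$ i.i.d.\ terms each of size $O(\varepsilon)=O(d^{-1/2})$, a Lyapunov CLT applies (the third-moment bound is $d\,O(\varepsilon^3)=O(\varepsilon)\to0$), so only the conditional mean and variance remain. The delicate point is that the naive first-order mean is of order $d\varepsilon=O(\varepsilon^{-1})$, and it is precisely the degree balance~\eqref{eq:avgdeg} (equivalently, stationarity of the prior $p$ along the tree) that forces the leading term to cancel, leaving an $O(1)$ mean. Tracking the surviving $O(1)$ pieces (the first-order corrections to the child-spin probabilities together with the $\varepsilon^2 g_2$ term) and invoking the Nishimori identity $\Exp{s(\xi_{r-1})^2}=p\,\Exp{s(\xi_{r-1})\mid\sigma=+}$ collapses both mean and variance onto the single scalar $\alpha_r=\lambda\,\Var{s(\xi_{r-1})}/(p^2(1-p)^2)$. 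Rewriting $\Var{s}$ through $\Prob{-\mid\mathcal{T}}$ and the law of $\xi_{r-1}$ supplied by the induction hypothesis identifies this with $G(\alpha_{r-1})$ and simultaneously pins the conditional means at $\pm\alpha_r/2$. Substituting the Gaussian law into the decision rule $\xi_t\ge0$ turns each conditional probability into a Gaussian tail $Q$ integrated over the label, producing~\eqref{eq:succBP}.

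For optimality I use~\eqref{eq:dtvpsucc}, so that $P_{succ}(T^{opt})=d_{TV}(P_+,P_-)$ is, on the tree, the success probability of thresholding the exact posterior log-ratio $\xi_\infty$. The depth-$t$ algorithm computes the exact posterior on the depth-$t$ tree, so $\alpha_t$ is the same density-evolution iterate; since $G$ is monotone increasing and $\alpha_0=0$, the sequence $\alpha_t$ increases to the smallest fixed point $\alpha_\infty$ of $G$. The optimal performance is then sandwiched: it is at least the BP value by optimality, and, since the Bayes-optimal posterior is itself stationary under the recursion, its order parameter is a fixed point of $G$, which under the uniqueness hypothesis must equal $\alpha_\infty$, bounding the optimal value by the value at $\alpha_\infty$. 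Equality yields both displayed identities and the asymptotic optimality of Algorithm~\ref{alg:BPlocal}.

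The main obstacle is the density-evolution step, and two parts of it deserve the most care. First, making the CLT-with-cancellation rigorous uniformly in $r$: one must control the remainder terms in the expansion of $f$, justify interchanging the $n\to\infty$ and $d\to\infty$ limits, and propagate the Gaussian approximation through the recursion without the $O(\varepsilon^{-1})$ mean spoiling the estimates. Second, the optimality half hinges on identifying $P_{succ}(T^{opt})$ with a fixed point of $G$ and excluding an information-theoretic gap; uniqueness of the fixed point is exactly what rules out the Bayes-optimal estimator exploiting a larger (``from above'') fixed point inaccessible to the locally initialized recursion, so the entire optimality conclusion rests on that hypothesis.
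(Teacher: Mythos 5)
Your treatment of the first claim~\eqref{eq:succBP} follows essentially the same route as the paper: reduce to the labeled Galton--Watson tree via a coupling of depth-$t$ neighborhoods, then run a Gaussian density evolution on the recursion~\eqref{eq:xirecurs} by induction on $r$, with the degree balance~\eqref{eq:avgdeg} cancelling the $O(\varepsilon^{-1})$ leading mean term and Nishimori-type identities collapsing the surviving mean and variance onto the single parameter $\alpha_r=G(\alpha_{r-1})$. This is exactly the content of Lemmas~\ref{lem:xi1}--\ref{lem:coupling}, and your expansion $f(x)=\varepsilon\,(s(x)-p)/(p(1-p))+O(\varepsilon^2)$ with $s(x)=\me^x/(1+\me^x)$ is correct; this half of the proposal is sound.

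The optimality half has a genuine gap. Your upper bound on $P_{succ}(T^{opt})$ rests entirely on the assertion that ``the Bayes-optimal posterior is itself stationary under the recursion, so its order parameter is a fixed point of $G$.'' That is precisely the statement requiring proof, and it fails as stated for two reasons. First, $T^{opt}$ observes the \emph{whole graph}, not a depth-$t$ neighborhood; the coupling lemma controls only the neighborhood, so identifying $P_{succ}(T^{opt})$ with a tree-posterior quantity is not automatic --- information beyond distance $t$ must be argued away. Second, even granting a reduction to the infinite tree, the Gaussian characterization (and hence the very existence of a scalar ``order parameter'') was established only for fixed finite $t$ as $d\to\infty$; asserting it for the $t\to\infty$ posterior requires an exchange of limits that your sketch does not control. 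The paper closes both holes simultaneously with a device your proposal lacks: the auxiliary estimator $\tilde T^t_{BP}$ that observes the depth-$t$ labeled tree \emph{together with the true spins at depth $t$}. By the screening argument (as in~\cite[Lemma 3.9]{mossel2016}: conditionally on the boundary spins, the graph beyond depth $t$ is independent of the root spin), $\tilde T^t_{BP}$ asymptotically dominates $T^{opt}$ for every fixed $t$. Its analysis is the same density evolution you already set up, only with initial condition $\tilde\alpha_1=\lambda/(p(1-p))$; since $G(\alpha)\le\lambda/(p(1-p))$ for all $\alpha$, the sequence $\tilde\alpha_t$ decreases to the \emph{largest} fixed point of $G$, while your $\alpha_t$ increases to the smallest. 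Uniqueness of the fixed point then closes the sandwich at finite $t$, without ever analyzing the $t=\infty$ posterior. Without this (or an equivalent) construction, your upper bound is an assertion, not a proof.
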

We now comment on the results and its implications. 

\paragraph{Special cases of $G(\alpha)$}
The function $G(\alpha)$ has been investigated for two special cases of the labeled stochastic block model. Analyzing $G(\alpha)$ for these special cases already turned out to be difficult, but some conjectures on its behavior have been made based on simulations.
In~\cite{mossel2016}, it was conjectured that for the special case where $p=1/2$ and the vertex labels are noisy versions of the spins, the function $G(\alpha)$ only has one fixed point for all possible values of $\lambda$. Thus, Algorithm~\ref{alg:BPlocal} is conjectured to perform optimally for the symmetric stochastic block model with noisy spins as vertex labels. 

The asymmetric stochastic block model where the information about the community memberships carried by the vertex labels goes to zero was studied in~\cite{caltagirone2017}. Instead of noisy spins as labels, a fraction of $\beta$ vertices reveals their true spins while the other vertices carry an uninformative label, where $\beta$ tends to zero as the graph grows large.
In that setting, it was conjectured that the function $G(\alpha)$ may have 2 or 3 fixed points for small values of $p$ and $\lambda<1$. 

\paragraph{Influence of the initial beliefs on the performance of Algorithm~\ref{alg:BPlocal}}
When $G(\alpha)$ has more than one fixed point, the success probability of Algorithm~\ref{alg:BPlocal} corresponds to the smallest fixed point of $G$. If in the belief propagation initialization the true unknown beliefs are used, the success probability of the algorithm corresponds to the largest fixed point of $G$. Since $G$ is increasing, this also implies that the success probability when initializing with the true unknown beliefs is higher than the success probability of Algorithm~\ref{alg:BPlocal}.

\paragraph{Multiple fixed points of $G$}
Figures~\ref{fig:Gfar} and~\ref{fig:Gclose} show that in the setting of Theorem~\ref{thm:sbmlarge} where information in the labels about the vertex spins does not vanish, the function $G(\alpha)$ may have more than one fixed point, even when the probability of observing the correct label does not go to $1/2$ as $n\to\infty$. This is very different from the special case where $p=1/2$, where the function $G(\alpha)$ was conjectured to have at most one fixed point~\cite{mossel2016}. Indeed, Figures~\ref{fig:Gfarsym} and~\ref{fig:Gclosesym} show that for the symmetric stochastic block model $G(\alpha)$ only contains one fixed point. 
For the asymmetric stochastic block model on the other hand, there is a region of parameters where Algorithm~\ref{alg:BPlocal} may not achieve the highest possible accuracy among all algorithms. Belief propagation initialized with the true beliefs corresponds to the highest fixed point of $G$, and thus results in a better estimator than belief propagation initialized with beliefs based on the vertex labels. In this case, exhaustive search of the partition space may still find all fixed points of the belief propagation algorithm, of which one corresponds to the fixed point having maximal overlap with the true partition. However, whether this fixed point can be distinguished from the other fixed points without knowledge of the true community spins is unknown. If this is possible, this would indicate a phase where community detection is possible, but computationally hard. If the fixed point is indistinguishable from the other fixed points, even exhaustive search of the partition space will not result in a better partition. In the asymmetric stochastic block model without vertex labels, it was shown that it is sometimes indeed possible to detect communities even underneath the Kesten-Stigum threshold~\cite{neeman2014} (in non-polynomial time). It would be interesting to see in which cases this also holds for the stochastic block model including vertex labels. 

\paragraph{Increasing vertex label information}
Interestingly, Figure~\ref{fig:Gclose} shows an example where the lowest and the highest fixed point of $G$ are stable, but the middle fixed point is unstable. Thus, to converge to a fixed point corresponding to a better correlation with the network partition than initializing at $\alpha=0$, the initial beliefs should correspond to an $\alpha$-value that is equal to or larger than the second fixed point of $G$ in this example.
Note that the case $\beta=0.5$ is similar to the community detection problem without extra vertex information, because for $\beta=0.5$ the vertex labels are independent of the community memberships. Thus, in the asymmetric stochastic block model, there is a fixed point of $G$ corresponding to a partition with non-trivial overlap with the true partition, but the BP algorithm does not find this partition when initialized with random beliefs. The same situation occurs when the information about the community membership carried by the vertex labels is small (for example when $\beta=0.48$). However, when the information carried by the vertex labels is sufficiently large, the BP algorithm starts to converge to the largest fixed point of $G$, and the BP algorithm performs optimally. Thus, including node covariates in the BP algorithm for the asymmetric stochastic block model may change the number of fixed points, and therefore significantly improve the performance of the BP algorithm.

\begin{figure}[htb]
	\centering
	\begin{subfigure}{0.48\linewidth}
		\centering
		\includegraphics[width=\textwidth]{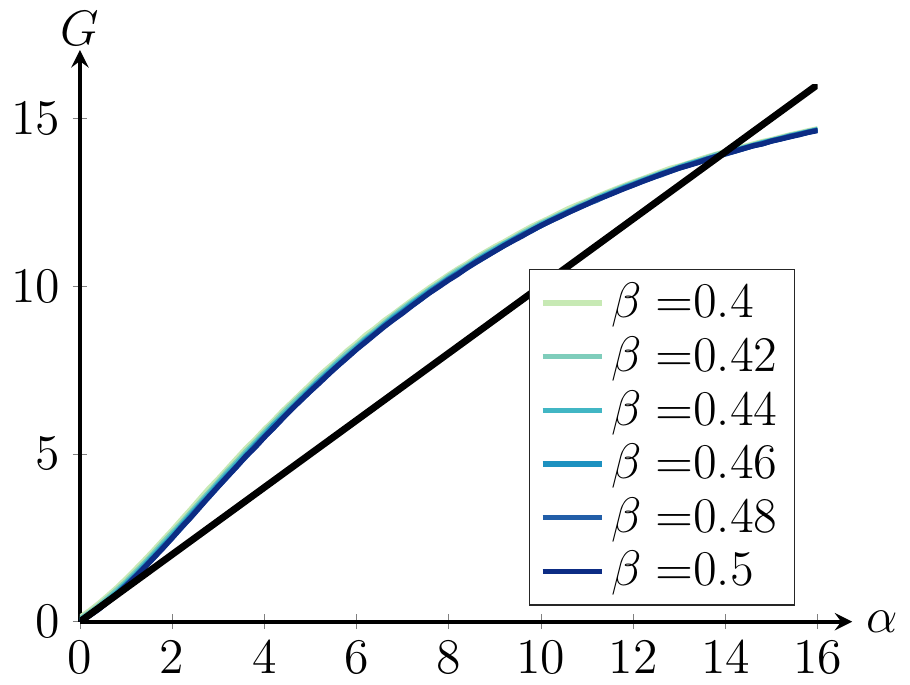}
		\caption{$p=0.05$}	
		\label{fig:Gfar}
	\end{subfigure}
	\hspace{0.1cm}
	\begin{subfigure}{0.48\linewidth}
		\centering
		\includegraphics[width=\textwidth]{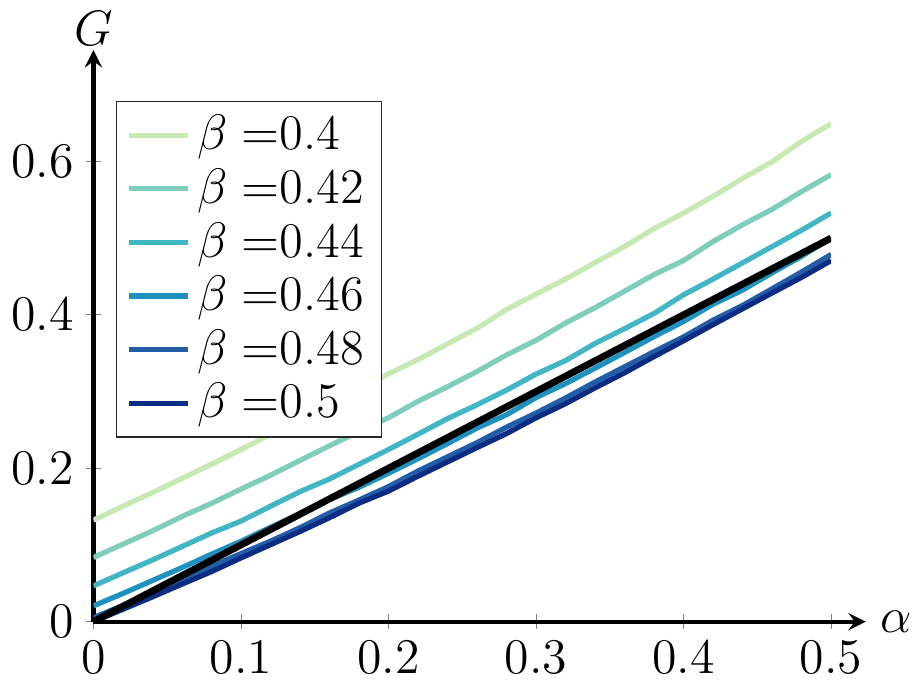}
		\caption{$p=0.05$, zoomed in}	
		\label{fig:Gclose}
	\end{subfigure}
	
	\begin{subfigure}{0.48\linewidth}
		\centering
		\includegraphics[width=\textwidth]{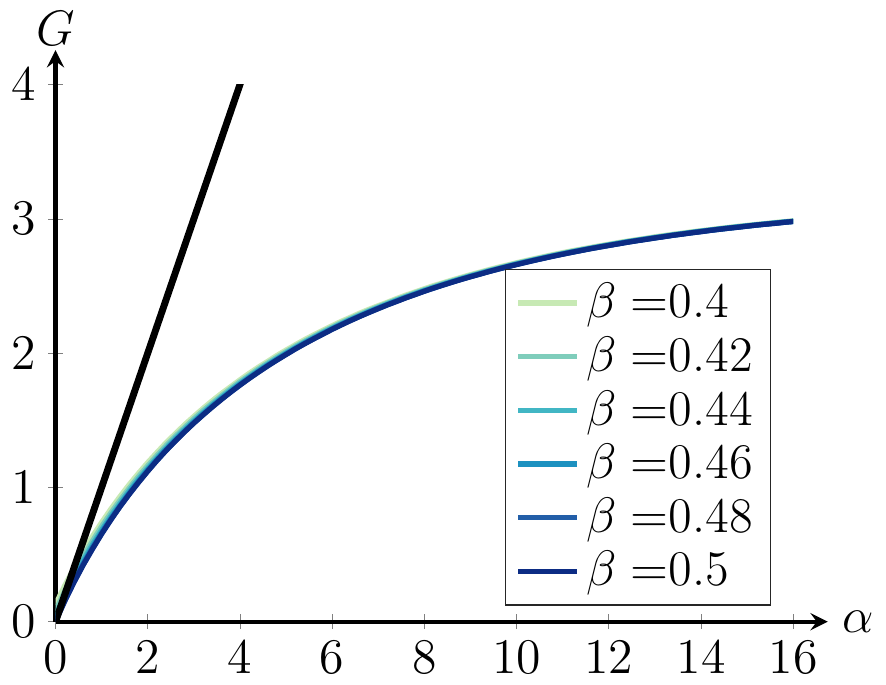}
		\caption{$p=0.5$}	
		\label{fig:Gfarsym}
	\end{subfigure}
	\hspace{0.1cm}
	\begin{subfigure}{0.48\linewidth}
		\centering
		\includegraphics[width=\textwidth]{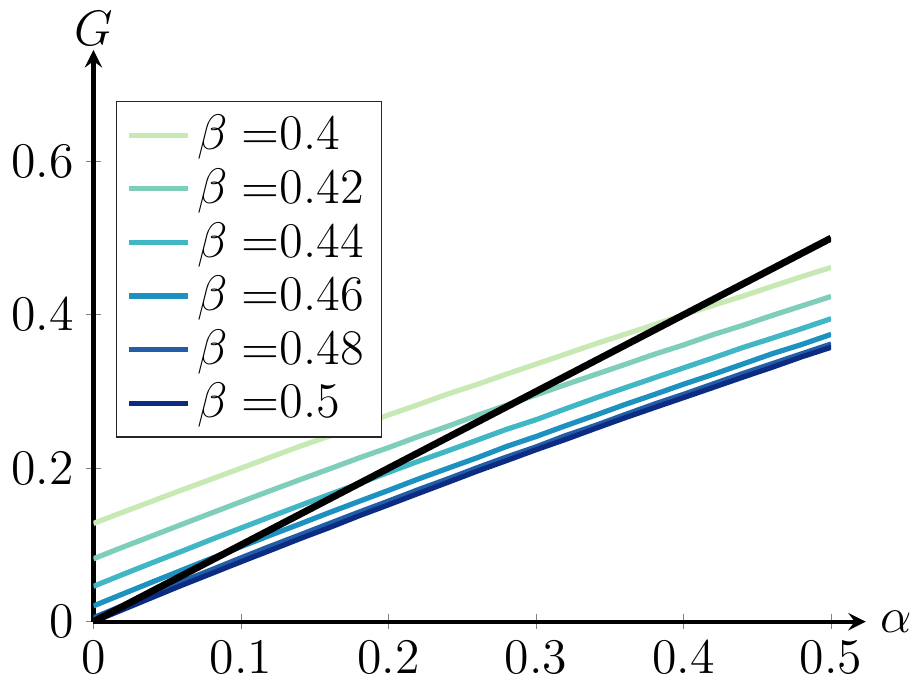}
		\caption{$p=0.5$, zoomed in}	
		\label{fig:Gclosesym}
	\end{subfigure}
	\caption{The function $G(\alpha)$ for $\lambda=0.8$ and noisy labels ($\mu(\ell_1)=\beta$, $\mu(\ell_3)=1-\beta$, $\nu(\ell_2)=\beta$, $\nu(\ell_3)=1-\beta$) for various values of $\beta$. The black line is the line $y=x$.}
	\label{fig:Gfix}
\end{figure}

\paragraph{Success probability}
Figures~\ref{fig:psuccpnoise} and~\ref{fig:psuccpreveal} plot the success probability of Algorithm~\ref{alg:BPlocal} given by equation~\eqref{eq:succBP} against $p$ for the case of noisy labels and revealed labels respectively. We see that for small and large values of $p$, these is a rapid increase in the success probability. This increase is caused by the shape of $G$, shown in Figures~\ref{fig:Gfar} and~\ref{fig:Gfarsym} for the setting with noisy labels. The location of the fixed point of $G$ is much closer to the origin for $p=0.5$ than for $p=0.05$. This difference causes the increase in the success probability.

Figures~\ref{fig:psucclnoise} and~\ref{fig:psucclreveal} show the success probability given by~\eqref{eq:succBP} as a function $\lambda$ for unbalanced communities. Here we also see that there is a small range of $\lambda<1$ where the success probability increases rapidly in $\lambda$. 
Figure~\ref{fig:BPlabels} shows that the accuracy obtained in Theorem~\ref{thm:sbmlarge} is higher than the accuracy that is obtained when only using the vertex labels to distinguish the communities, even underneath the Kesten-Stigum threshold $\lambda<1$.

\begin{figure}[htb]
	\centering
	\begin{subfigure}[t]{0.48\linewidth}
		\centering
		\includegraphics[width=\textwidth]{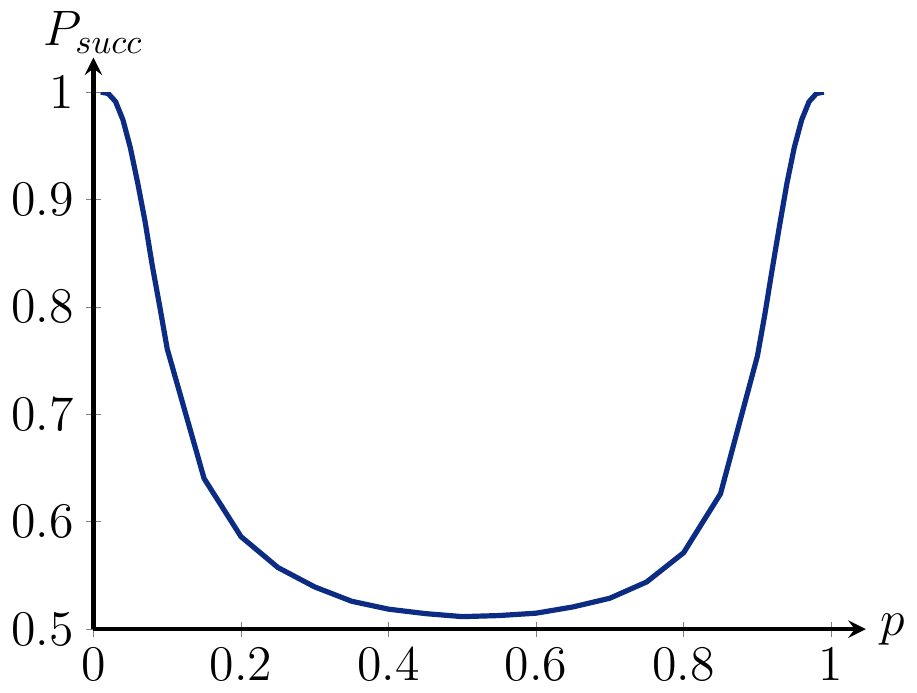}
		\caption{Noisy vertex labels: $\mu(\ell_1)=0.55=1-\mu(\ell_2)$, $\nu(\ell_2)=0.85=1-\nu(\ell_1)$.}
		\label{fig:psuccpnoise}
	\end{subfigure}
\hspace{0.1cm}
	\begin{subfigure}[t]{0.48\linewidth}
		\centering
		\includegraphics[width=\textwidth]{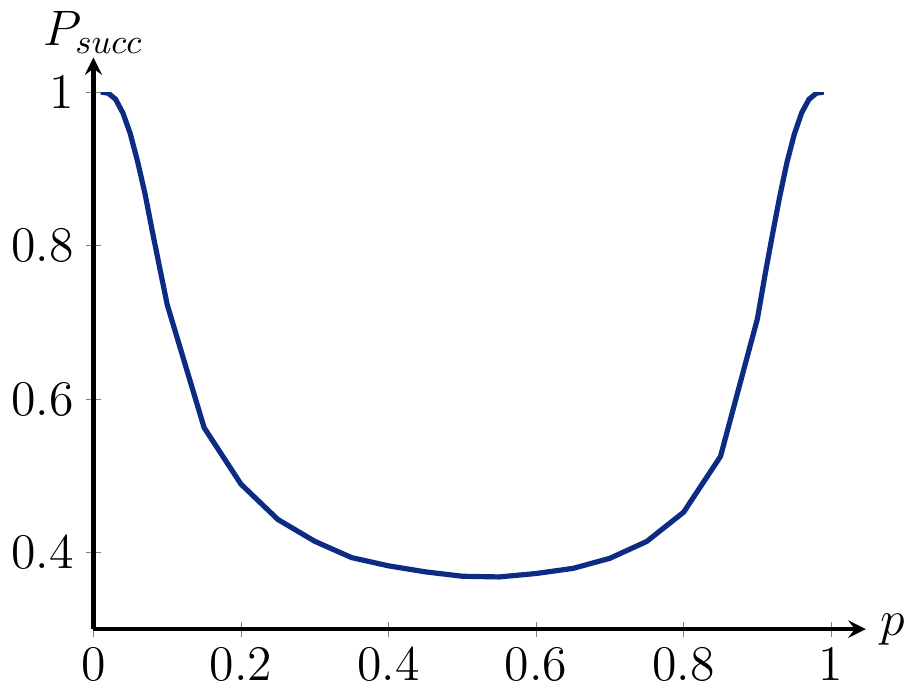}
		\caption{Revealed vertex labels: $\mu(\ell_1)=0.1=1-\mu(\ell_3)$, $\nu(\ell_2)=0.05=1-\nu(\ell_3)$.}
		\label{fig:psuccpreveal}
	\end{subfigure}
\caption{$P_{succ}$ as a function of $p$ for $\lambda=0.8$}
\end{figure}
\begin{figure}[htbp]
	\centering
	\begin{subfigure}[t]{0.48\linewidth}
		\centering
		\includegraphics[width=\textwidth]{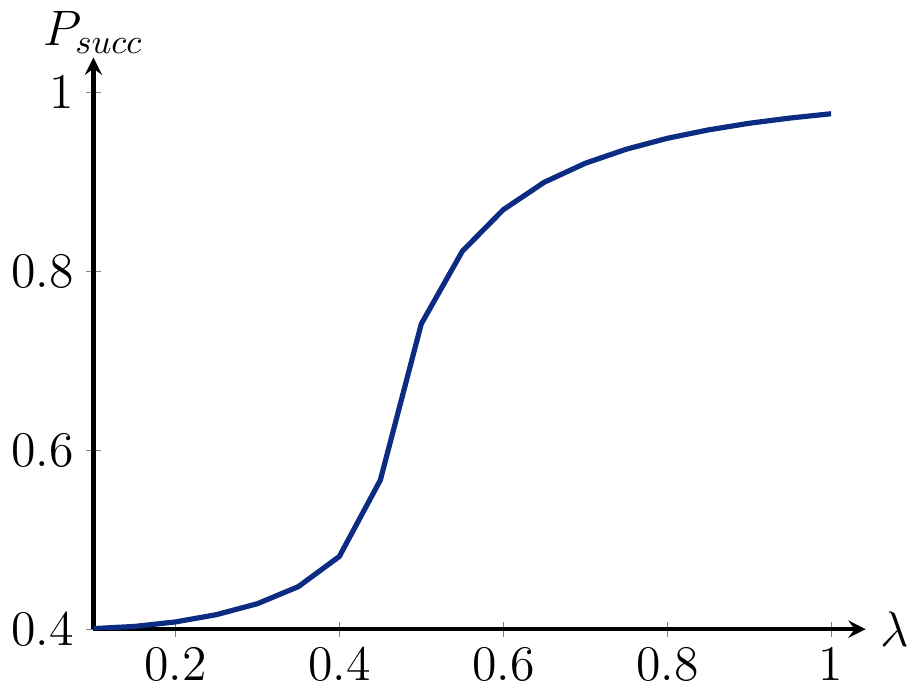}
		\caption{Noisy vertex labels: $\mu(\ell_1)=0.55=1-\mu(\ell_2)$, $\nu(\ell_2)=0.85=1-\nu(\ell_1)$.}
		\label{fig:psucclnoise}
	\end{subfigure}
	\hspace{0.1cm}
	\begin{subfigure}[t]{0.48\linewidth}
		\centering
		\includegraphics[width=\textwidth]{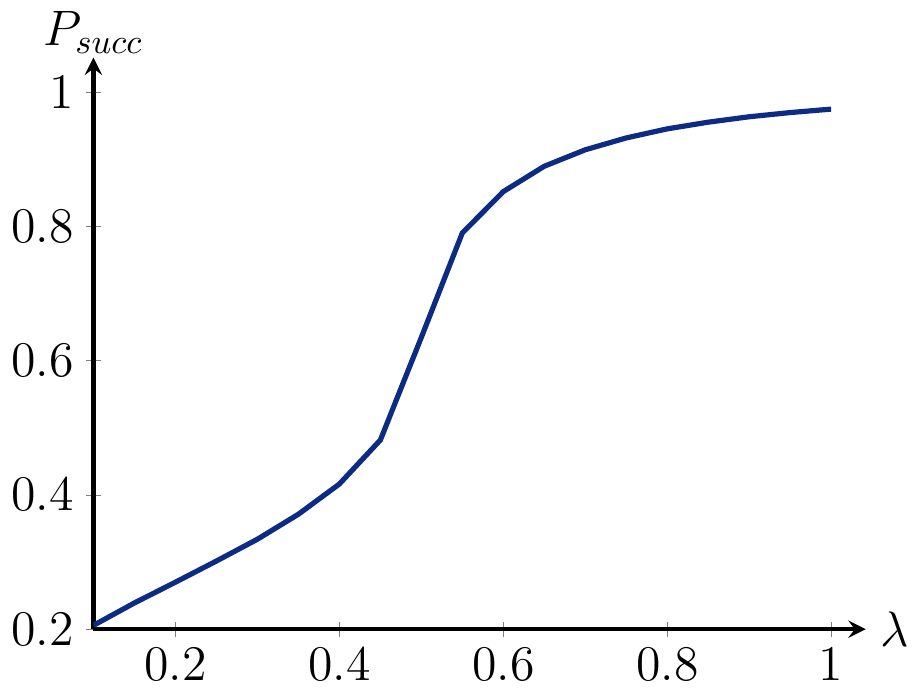}
		\caption{Revealed vertex labels: $\mu(\ell_1)=0.1=1-\mu(\ell_3)$, $\nu(\ell_2)=0.05=1-\nu(\ell_3)$.}
		\label{fig:psucclreveal}
	\end{subfigure}
\caption{$P_{succ}$ as a function of $\lambda$ for $p=0.05$}
\end{figure}

\begin{figure}
	\centering
	\includegraphics[width=0.45\textwidth]{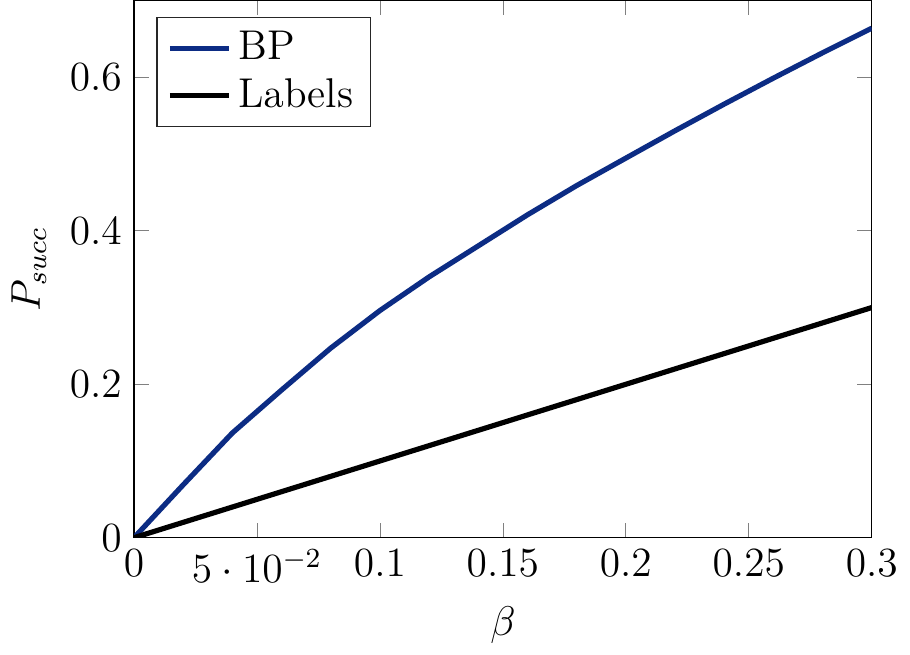}
	\caption{Success probability of Algorithm~\ref{alg:BPlocal} and the success probability when only using the vertex labels for $p=0.5$, $\lambda=0.8$,  $\mu(\ell_1)=0.5+\beta=1-\mu(\ell_2)$ and $\nu(\ell_1)=0.5-\beta=1-\nu(\ell_2)$.}
	\label{fig:BPlabels}
\end{figure}

\section{Proof of Theorem~\ref{thm:sbmlarge}}
Because the SBM is locally tree-like, we first investigate Algorithm~\ref{alg:BPlocal} on a Galton-Watson tree defined in Section~\ref{sec:GW}, where we study the recursion~\eqref{eq:xirecurs}. 
Denote by $\xi_r^\pp$ the value of $\xi_r$ for a randomly chosen vertex in community + and define $\xi_r^\mm$ similarly. Then, $\xi_0^\pp\overset{d}{=}U_++w$ and $\xi_0^\mm\overset{d}{=}U_-+w$. We first investigate the distribution of $\xi_1$. 

\begin{lemma}\label{lem:xi1}
	As $d\to\infty$,
	\begin{align}
		\lim_{d\to\infty} \xi_1^\pp& \wlim U_++w+\mathcal{N}(\alpha_1/2,\alpha_1),\\
			\lim_{d\to\infty} \xi_1^\mm& \wlim U_++w-\mathcal{N}(\alpha_1/2,\alpha_1),
	\end{align}
	where $\wlim$ denotes convergence in the Wasserstein metric (see for example~\cite[Section 2]{gibbs2002}).
\end{lemma}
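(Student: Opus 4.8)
The plan is to separate the root's own label from the contribution of its children and to recognize the latter as a compound Poisson sum that concentrates onto a Gaussian. Conditioning on $\sigma_{s_0}=+$, write $\xi_1^\pp=h(L_{s_0})+w+S^+$ with $S^+=\sum_{j\in\partial_{s_0}}f(\xi_0^{(j)})$, where $h(L_{s_0})\overset{d}{=}U_+$ is independent of $S^+$. Each child contributes $f(\xi_0^{(j)})=f(h(L_j)+w)$, and by Poisson thinning the offspring of a $+$-root decompose into independent $\mathrm{Poisson}(dpa)$ many $+$-children (labels drawn from $\mu$) and $\mathrm{Poisson}(d(1-p)b)$ many $-$-children (labels from $\nu$). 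Hence $S^+$ is a sum of two independent compound Poisson variables, and I would track its law through the cumulants $\kappa_k=dpa\,\Exp{f(U_++w)^k}+d(1-p)b\,\Exp{f(U_-+w)^k}$, equivalently through $\log\Exp{\me^{\mathrm{i}tS^+}}=dpa(\Exp{\me^{\mathrm{i}tf(U_++w)}}-1)+d(1-p)b(\Exp{\me^{\mathrm{i}tf(U_-+w)}}-1)$.

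Next I would expand in $\varepsilon$ along \eqref{eq:abc} with $d=\lambda/\varepsilon^2$. A Taylor expansion gives $f(x)=\varepsilon g(x)+\varepsilon^2\tfrac{2p-1}{2}g(x)^2+O(\varepsilon^3)$ with $g(x)=\tfrac{1}{\me^x+1}(\tfrac{\me^x}{p}-\tfrac{1}{1-p})$, and evaluating at the depth-$0$ beliefs this collapses to $g(h(\ell)+w)=(\mu(\ell)-\nu(\ell))/m(\ell)$, where $m(\ell)=p\mu(\ell)+(1-p)\nu(\ell)$. For the variance the first-order term already suffices: $\kappa_2\to\lambda\sum_\ell(\mu(\ell)-\nu(\ell))^2/m(\ell)$, and the elementary identity $p(\mu-\nu)+\nu=m$ turns this into $G(0)=\alpha_1$. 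The mean is the subtle part: its $O(d\varepsilon)=O(1/\varepsilon)$ leading contribution cancels exactly because $\sum_\ell(\mu(\ell)-\nu(\ell))=0$, so the surviving finite mean comes from the $O(d\varepsilon^2)$ terms, which require both the quadratic term of $f$ and the $\varepsilon$-corrections $dpa=dp+d(1-p)\varepsilon$ and $d(1-p)b=d(1-p)-d(1-p)\varepsilon$; collecting them gives $\kappa_1\to\alpha_1/2$.

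For the limit law, the higher cumulants scale as $\kappa_k\sim d\varepsilon^k=\lambda\varepsilon^{k-2}\to0$ for $k\ge3$, so $\log\Exp{\me^{\mathrm{i}tS^+}}\to\mathrm{i}t\alpha_1/2-t^2\alpha_1/2$ and $S^+$ converges weakly to $\mathcal N(\alpha_1/2,\alpha_1)$; since every moment converges this is convergence in the Wasserstein metric, and as $U_+$ is an independent bounded summand the sub-additivity of $W_p$ under convolution yields $\xi_1^\pp\wlim U_++w+\mathcal N(\alpha_1/2,\alpha_1)$. For $\xi_1^\mm$ the same argument applies verbatim after replacing the offspring intensities by $dpb$ and $d(1-p)c$: the root label is then distributed as $U_-$, the variance is unchanged, and the $\varepsilon$-corrections enter with the opposite sign, flipping the mean to $-\alpha_1/2$ and giving $\xi_1^\mm\wlim U_-+w-\mathcal N(\alpha_1/2,\alpha_1)$.

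The main obstacle is the joint scaling $d\to\infty$, $\varepsilon\to0$ with $d\varepsilon^2=\lambda$ fixed: the mean is a difference of two individually divergent $O(1/\varepsilon)$ quantities, so the whole statement hinges on the exact cancellation from $\sum_\ell(\mu-\nu)=0$ and then on the precise bookkeeping of the second-order coefficient $\tfrac{2p-1}{2}g^2$ against the $\varepsilon$-corrections of the offspring intensities — a slip in either destroys the value $\alpha_1/2$. A secondary point is upgrading weak convergence to Wasserstein convergence, which needs uniform integrability; here the explicit compound-Poisson cumulants make the required moment control routine.
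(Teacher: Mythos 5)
Your proof is correct, and its computational core coincides with the paper's: both rest on the Poisson-thinning decomposition of the root's children, the expansion $f(x)=\varepsilon g(x)+\varepsilon^2\tfrac{2p-1}{2}g(x)^2+O(\varepsilon^3)$ evaluated at the depth-$0$ beliefs (where $g(h(\ell)+w)=(\mu(\ell)-\nu(\ell))/m(\ell)$), the exact cancellation of the divergent $O(d\varepsilon)$ mean term via $\sum_\ell(\mu(\ell)-\nu(\ell))=0$, and the bookkeeping $(1-p)+\tfrac{2p-1}{2}=\tfrac12$ that produces the mean $\alpha_1/2$ and variance $\alpha_1=G(0)$. Where you genuinely differ is the organization and the distributional-limit step. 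The paper groups children by label, writing $\xi_1^\pp\overset{d}{=}U_++w+\sum_{\ell'}N_{\ell'}f(h(\ell')+w)$ with $N_{\ell'}$ Poisson and \emph{deterministic} jump sizes, then centers each $N_{\ell'}$ and invokes a Wasserstein CLT for Poisson variables cited from~\cite{caltagirone2017}. You group children by spin, so the jump sizes $f(U_\pm+w)$ are random, and you establish the Gaussian limit self-containedly through the compound-Poisson log-characteristic function: $\kappa_1\to\alpha_1/2$, $\kappa_2\to\alpha_1$, and $\kappa_k=O(\varepsilon^{k-2})\to0$ for $k\geq3$ because $\abs{f}=O(\varepsilon)$ uniformly (note $f$ is monotone with limits $\log(a/b)$ and $\log(b/c)$, both $O(\varepsilon)$, so this uniform bound holds even if some $h(\ell)=\pm\infty$); moment control then upgrades weak convergence to Wasserstein, and the contraction of the Wasserstein metric under convolution with the independent root term handles $U_++w$. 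Your route buys a proof that does not lean on an external limit theorem, at the price of the cumulant bookkeeping; the paper's route is shorter but delegates the CLT and the Wasserstein upgrade to the cited reference. One remark: your conclusion $\xi_1^\mm\wlim U_-+w-\mathcal{N}(\alpha_1/2,\alpha_1)$ is the intended statement; the lemma as printed writes $U_+$ in the minus case, which is a typo, as the induction hypothesis of the subsequent lemma and the paper's own proof (``similar arguments'') confirm.
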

\begin{proof}
From the recursion~\eqref{eq:murec} we obtain
\begin{equation*}
\begin{aligned}[b]
\alpha_1& = G(0)=\frac{\lambda}{p^2}\sum_{\ell}\nu(\ell)\frac{p-p\frac{\mu(\ell)}{\nu(\ell)}}{1-p+p\frac{\mu(\ell)}{\nu(\ell)}}\\
& =\frac{\lambda}{p}\sum_{\ell}\nu(\ell)\frac{\nu(\ell)-\mu(\ell)}{(1-p)\nu(\ell)+p\mu(\ell)}\\
& = \frac{\lambda}{p}\sum_{\ell}\frac{\nu(\ell)^2-\mu(\ell)\nu(\ell)}{(1-p)\nu(\ell)+p\mu(\ell)}+\mu(\ell)-\nu(\ell)\\
& = \frac{\lambda}{p}\sum_{\ell}\frac{p\nu(\ell)^2+p\mu(\ell)^2-2p\nu(\ell)\mu(\ell)}{(1-p)\nu(\ell)+p\mu(\ell)}\\
& =\lambda\sum_{\ell}\frac{(\mu(\ell)-\nu(\ell))^2}{p\mu(\ell)+(1-p)\nu(\ell)}.
\end{aligned}
\end{equation*}
Furthermore, we can write $\xi_1^\pp$ as 
\begin{equation}\label{eq:xiplus}
\xi_1^\pp\overset{(d)}{=}U_++w+\sum_{\ell'}N_{\ell'}f(h(\ell')+w),
\end{equation}
where $N_{\ell'}\sim\text{Poisson}(d(pa\mu(\ell')+(1-p)b\nu(\ell')))$, independent of $U_+$.
Subtracting and adding the mean of the Poisson variable yields
\begin{equation}\label{eq:xi1}
\begin{aligned}[b]
	\xi_1^\pp& \overset{(d)}{=}\sum_{\ell'}(N_{\ell'}-d(pa\mu(\ell')+(1-p)b\nu(\ell')))f(h(\ell')+w)\\
	& \quad +\sum_{\ell'}d(pa\mu(\ell')+(1-p)b\nu(\ell')))f(h(\ell')+w)+U_+.
\end{aligned}
\end{equation}
In the regime we are interested in, $a=1+\frac{1-p}{p}\varepsilon$, $b=1-\varepsilon$, $c=1+\frac{p}{1-p}\varepsilon$ and $d=\lambda/\varepsilon^2$ for some $\varepsilon>0$ (see~\eqref{eq:abc}). Then, Taylor expanding $f(h(\ell)+w)$ around $\varepsilon=0$ results in
\begin{equation*}
\begin{aligned}[b]
f(h(\ell)+w)& =\log\left(\frac{(1+\varepsilon\frac{1-p}{p})p\mu(\ell)+(1-\varepsilon)(1-p)\nu(\ell)}{(1-\varepsilon)p\mu(\ell)+(1+\varepsilon\frac{p}{1-p})(1-p)\nu(\ell)}\right)\\
& =\varepsilon \frac{\mu(\ell)-\nu(\ell)}{p\mu(\ell)+(1-p)\nu(\ell)}\\
& \quad +\frac{\varepsilon^2}{2} \frac{(2p-1)(\mu(\ell)-\nu(\ell))^2}{(p\mu(\ell)+(1-p)\nu(\ell))^2}+O(\varepsilon^3).
\end{aligned}
\end{equation*}
This shows that the last term in~\eqref{eq:xi1} can be rewritten as
\begin{equation*}
\begin{aligned}[b]
	& \sum_{\ell'}d(pa\mu(\ell')+b(1-p)\nu(\ell')))f(h(\ell') \\
	&= \frac{\lambda}{\varepsilon^2} \sum_{\ell'}((1+\tfrac{1-p}{p}\varepsilon)p\mu(\ell')\\
	& \quad +(1-\varepsilon)(1-p)\nu(\ell')))f(h(\ell)+w)\\
	& =\frac{\lambda}{2}  \sum_{\ell'}\frac{(\mu(\ell')-\nu(\ell'))^2}{p\mu(\ell')+(1-p)\nu(\ell')}+O(\varepsilon) \\
	& = \alpha_1/2+O(\varepsilon).
\end{aligned}
\end{equation*}
By~\cite[Corollary A3]{caltagirone2017},
\begin{equation*}
\frac{N_{\ell'}-d(ap\mu(\ell')+b(1-p)\nu(\ell'))}{\sqrt{d(ap\mu(\ell')+b(1-p)\nu(\ell'))}}\wlim \mathcal{N}(0,1),
\end{equation*}
as $d\to\infty$. Then, 
\begin{equation*}
\sum_{\ell'}N_{\ell'}-d(ap\mu(\ell')+b(1-p)\nu(\ell')))f(h(\ell'))\wlim\mathcal{N}(0,\alpha_1).
\end{equation*}
Thus, as $d\to\infty$
\begin{equation*}
	\xi_1^\pp \wlim U_++w+ \mathcal{N}(\alpha_1/2,\alpha_1)
\end{equation*}
and similar arguments prove the lemma for $\xi_1^\mm$.
\end{proof}

We now proceed to the distribution of $\xi_r$ for $r>1$ by using induction. 
\begin{lemma} \label{lem:induction}
	Assume that 
\begin{align}
		\xi_{r}^\pp & \wlim U_++w+ \mathcal{N}(\alpha_{r}/2,\alpha_r)\\
		\xi_{r}^\mm & \wlim U_- +w- \mathcal{N}(\alpha_{r}/2,\alpha_r)
\end{align}
for some  $r\geq 1$. Then,
\begin{align}
\xi_{r+1}^\pp &\wlim U_++w+ \mathcal{N}(\alpha_{r+1}/2,\alpha_{r+1})\\
\xi_{r+1}^\mm&\wlim  U_-+w- \mathcal{N}(\alpha_{r+1}/2,\alpha_{r+1})
\end{align}
as $d\to\infty$.
\end{lemma}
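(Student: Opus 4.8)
The plan is to mimic the base case (Lemma~\ref{lem:xi1}), replacing the deterministic depth-$0$ messages $h(L_j)+w$ by the random depth-$r$ messages supplied by the induction hypothesis. First I would condition on the root spin. For a $+$ root the recursion~\eqref{eq:xirecurs} gives $\xi_{r+1}^\pp = U_+ + w + S$, where $S=\sum_{j\in\partial_{s_0}}f(\xi_r^{(j)})$ is a compound-Poisson sum: the $+$ children form a Poisson process of intensity $dap$, each contributing an independent copy of $f(\xi_r^\pp)$, while the $-$ children form a Poisson process of intensity $db(1-p)$, each contributing $f(\xi_r^\mm)$. By the induction hypothesis the two families of summands have laws that converge in $\wlim$ to $f$ evaluated at $U_+ +w+\mathcal{N}(\alpha_r/2,\alpha_r)$ and $U_- +w-\mathcal{N}(\alpha_r/2,\alpha_r)$ respectively. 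The goal is to show $S \wlim \mathcal{N}(\alpha_{r+1}/2,\alpha_{r+1})$; the $-$ root case is identical after swapping the intensities to $dbp$ and $dc(1-p)$, which is what flips the sign of the mean.

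Next I would record two deterministic inputs. In the regime~\eqref{eq:abc} the function $f$ is uniformly $O(\varepsilon)$ (it is a bounded monotone function of $\me^x$ with limits $\log(b/c)$ and $\log(a/b)$, both $O(\varepsilon)$), and a Taylor expansion gives $f(x)=\varepsilon\,g(x)+O(\varepsilon^2)$, with uniform error, where
\[
g(x)=\frac{(1-p)\me^x-p}{p(1-p)(\me^x+1)}
\]
is a fixed bounded Lipschitz function (independent of $\varepsilon$). Since $g$ and $g^2$ are bounded and Lipschitz, the $\wlim$-convergence of the children messages transfers to convergence of $\Exp{g(\xi_r^\pp)}$, $\Exp{g(\xi_r^\mm)}$ and their squares, so the moments of $S$ may be evaluated against the limiting Gaussian-mixture laws.

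I would then compute $\Exp{S}=dap\,\Exp{f(\xi_r^\pp)}+db(1-p)\Exp{f(\xi_r^\mm)}$ and, for the compound Poisson, $\Var{S}=dap\,\Exp{f(\xi_r^\pp)^2}+db(1-p)\Exp{f(\xi_r^\mm)^2}$, with $d=\lambda/\varepsilon^2$. The organising tool is the exact Bayes (Nishimori) identity for the log-likelihood ratio, $\Exp{\psi(\xi_r^\pp)}=\tfrac{1-p}{p}\Exp{\psi(\xi_r^\mm)\me^{\xi_r^\mm}}$ for any test function $\psi$, whose $\psi\equiv 1$ case gives $\Exp{\me^{\xi_r^\mm}}=p/(1-p)$. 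Applied with $\psi=g$, together with $g(x)(\me^x+1)=\me^x/p-1/(1-p)$, it makes the leading $O(\varepsilon^{-1})$ part of $\Exp{S}$ cancel, so the mean stays $O(1)$; applied with $\psi=g^2$ it collapses $\Var{S}$ to $\lambda(1-p)\Exp{g(\xi_r^\mm)^2(1+\me^{\xi_r^\mm})}$, which after simplifying and using $\Exp{\me^{\xi_r^\mm}}=p/(1-p)$ is exactly $G(\alpha_r)=\alpha_{r+1}$, with $G$ as in~\eqref{eq:Gmu}. Because $|f|=O(\varepsilon)$ uniformly while there are $\Theta(\varepsilon^{-2})$ summands with variance $\Theta(1)$, Lindeberg's condition holds trivially (paralleling the Poisson CLT used in Lemma~\ref{lem:xi1}), so $S-\Exp{S}\wlim\mathcal{N}(0,\alpha_{r+1})$. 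To pin the mean to $\alpha_{r+1}/2$ I would either retain the $O(\varepsilon^2)$ Taylor term and simplify with the same identities, or, more cheaply, pass the exact identities $\Exp{\me^{-\xi_{r+1}^\pp}}=(1-p)/p$ and $\Exp{\me^{\xi_{r+1}^\mm}}=p/(1-p)$ to the limit via the compound-Poisson moment generating function, forcing the limiting means to be $\pm\alpha_{r+1}/2$. Adding back $U_\pm+w$, and upgrading weak convergence plus the convergence of $\Exp{S}$ and $\Var{S}$ (hence uniform integrability) to convergence in $\wlim$, yields the two claimed statements.

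The main obstacle is the triangular-array nature of the limit: the children messages are only asymptotically Gaussian and themselves vary with $d$, yet $\Theta(d)$ of them are summed. Two features make this tractable and are the crux of the argument. The uniform bound $|f|=O(\varepsilon)$ simultaneously makes the Lindeberg CLT automatic and controls the error in replacing $f$ by $\varepsilon g$, so the $d$-dependence of $f$ factors out cleanly. The Bayes/Nishimori identity is what forces the otherwise-divergent $O(\varepsilon^{-1})$ mean term to vanish and, more strikingly, makes the second moment of $S$ collapse to precisely the analytic quantity $G(\alpha_r)$ defining $\alpha_{r+1}$. Recognising that the probabilistic variance of $S$ reproduces the map $G$ is the step that closes the induction.
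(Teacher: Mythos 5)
Your proposal is correct and follows essentially the same route as the paper's proof: conditioning on the root spin, computing the mean and variance of the compound-Poisson sum of children messages via Wald's identities, Taylor expanding $f$ in $\varepsilon$, and using the Bayes/Nishimori identity (the paper's $(1-p)\Exp{g(\xi_r^\mm)}=p\Exp{g(\xi_r^\pp)\me^{-\xi_r^\pp}}$, packaged in the paper as four identities for $h_1,h_2$) to cancel the $O(\varepsilon^{-1})$ term and collapse the variance to $G(\alpha_r)=\alpha_{r+1}$, before concluding with a CLT for the triangular array (which the paper delegates to~\cite[Proposition 23]{caltagirone2017} rather than invoking Lindeberg directly). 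The only differences are cosmetic: your general test-function form of the identity and your alternative exponential-moment route to pin the mean at $\alpha_{r+1}/2$ replace the paper's explicit retention of the $\varepsilon^2$ Taylor terms, but the substance is identical.
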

\begin{proof}
Define
\begin{equation}
\gamma_r^{(s_0)}= \xi_r^{(s_0)}-h(L_{s_0})-w,
\end{equation}
and define $\gamma_r^\pp$ as the value of $\gamma_r^{(s_0)}$ for a randomly chosen $s_0$ in community +.
We start by investigating the first moment of $\gamma_{r+1}^\pp$. Using Walds equations, we obtain
\begin{equation}\label{eq:expxir}
\begin{aligned}[b]
\Exp{\gamma_{r+1}^\pp}&= dap\Exp{f(\xi_{r}^\pp)}+d b(1-p)\Exp{ f(\xi_{r}^\mm)}\\
& =\tfrac{\lambda}{\varepsilon^2}(p+\varepsilon(1-p))\Exp{f(\xi_{r}^\pp)}\\
& \quad +\tfrac{\lambda}{\varepsilon^2} (1-\varepsilon)(1-p)\Exp{f(\xi_{r}^\mm)},
\end{aligned}
\end{equation}
where the second line uses~\eqref{eq:abc}. 
We then use that~\cite[Eq. (A4)]{caltagirone2017} 
\begin{equation*}
\begin{aligned}[b]
f(x)&=\log\Big(1+\varepsilon\frac{\me^x}{p(1+\me^x)}+\varepsilon^2\frac{\me^x}{p(1+\me^x)}+O(\varepsilon^3)\Big)\\
& \quad -\log\Big(1+\varepsilon\frac{1}{(1-p)(1+\me^x)}\\
& \quad +\varepsilon^2\frac{1}{(1-p)(1+\me^x)}+O(\varepsilon^3)\Big).
\end{aligned}
\end{equation*}
Taylor expanding $\log(1+x)$ then results in 
\begin{equation}\label{eq:fappr}
\begin{aligned}[b]
f(x)& = \varepsilon\frac{\me^{x}(1+\varepsilon)}{p(1+\me^x)}-\varepsilon\frac{1+\varepsilon}{(1-p)(1+\me^x)}-\varepsilon^2\frac{1-\me^{2x}}{2p^2(1+\me^x)^2}\\
& \quad +\varepsilon^2\frac{1}{2(1-p)^2(1+\me^x)^2}+O(\varepsilon^3).
\end{aligned}
\end{equation}
For all bounded continuous functions $g(x)$ by~\cite[Lemma A6]{caltagirone2017}
\begin{equation}
(1-p)\Exp{g(\xi_r^\mm)}=p\Exp{g(\xi_r^\pp)\me^{-\xi_r^\pp}}.
\end{equation}
Denote $h_1(x)=(1+\me^x)^{-1}$ and $h_2(x)=\me^{x}(1+\me^x)^{-1}$. Then,
\begin{align*}
	(1-p)\Exp{h_2(\xi_r^{\scriptscriptstyle{(-)}})}+p\Exp{h_2(\xi_r^\pp)} & = p,\\
	(1-p)\Exp{h_1(\xi_r^{\scriptscriptstyle{(-)}})}+p\Exp{h_1(\xi_r^\pp)} & =1- p,\\
	(1-p)\Exp{h_2(\xi_r^\mm)^2}+p\Exp{h_2(\xi_r^\pp)^2} & = p\Exp{h_2(\xi_r^\pp)},\\
		(1-p)\Exp{h_1(\xi_r^\mm)^2}+p\Exp{h_1(\xi_r^\pp)^2} & = (1-p)\Exp{h_2(\xi_r^\mm)},
\end{align*}
Combining this with~\eqref{eq:expxir} and~\eqref{eq:fappr} gives
\begin{equation*}
\begin{aligned}[b]
&\Exp{\gamma_{r+1}^\pp}
= 
	\tfrac{\lambda}{\varepsilon^2}\Big(\varepsilon(1-1)+\varepsilon^2\Big(\frac{1}{2p}\Exp{h_2(\xi_r^\pp)}\\
	& \quad +\frac{1}{2(1-p)}\Exp{h_1(\xi_{r}^\mm)}-\frac{1}{p}\Exp{h_2(\xi_r^\mm)}\Big)\Big)+O(\varepsilon)\\
	&=\lambda\Big( \frac{1}{2p}-\frac{1-p}{2p^2}\Exp{h_2(\xi_r^\mm)}\\
	& \quad +\frac{1}{2(1-p)}\Exp{h_1(\xi_r^\mm)}-\frac{1}{p}\Exp{h_2(\xi_r^\mm)}\Big)+O(\varepsilon)\\
	&=\lambda\Big(\frac{1}{2p}-\frac{1+p}{2p^2}+\frac{1+p}{2p^2}\Exp{h_1(\xi_r^\mm)}\\
	& \quad +\frac{1}{2(1-p)}\Exp{h_1(\xi_r^\mm)}\Big)+O(\varepsilon)\\
	& = \frac{\lambda}{2p^2}\Exp{\frac{1}{(1+\me^{\xi_r^\mm})(1-p)}-1} +O(\varepsilon).
\end{aligned}
\end{equation*}
Combining this with the induction hypothesis results in
\begin{equation*}
\begin{aligned}[b]
&\Exp{\gamma_{r+1}^\pp}	\\
& = \frac{\lambda}{2p^2}\Exp{\frac{1}{(1-p)(1+\me^{U_-+w+\sqrt{\alpha_r}Z-\alpha_r/2})}-1} +O(\varepsilon)\\
& =\frac{\lambda}{2p^2}\Exp{\frac{1}{1-p+p\me^{U_-+\sqrt{\alpha_r}Z-\alpha_r/2}}-1} +O(\varepsilon)\\
& =\tfrac{1}{2}G(\alpha_r)+O(\varepsilon).
\end{aligned}
\end{equation*}
For the variance, we obtain using Walds equation
\begin{equation}
\begin{aligned}[b]
& \Var{{\gamma_{r+1}^\pp}} = dap\Exp{f(\xi_{r}^\pp)^2}+d b(1-p)\Exp{f(\xi_{r}^\mm)^2}\\
 & = 2\lambda (1+\varepsilon)\Big(\frac{1}{p^2}\Exp{h_2(\xi_r^\pp)^2}+\frac{1}{(1-p)^2}\Exp{h_1(\xi_r^\pp)^2} \\
 &\quad  +\frac{2}{p(1-p)}\Exp{h_1(\xi_r^\pp)^2\me^{\xi_r^\pp}}\Big)\\
 &\quad +(1-\varepsilon)\Big(\frac{1}{p^2}\Exp{h_2(\xi_r^\mm)^2}+\frac{1}{(1-p)^2}\Exp{h_1(\xi_r^\mm)^2}\\
 &\quad  +\frac{2}{p(1-p)}\Exp{h_1(\xi_r^\mm)^2\me^{\xi_r^\mm}}\Big)+O(\varepsilon),
\end{aligned}
\end{equation}
where we used~\eqref{eq:fappr} again. Similar computations as for the expected value then lead to
\begin{equation}
\Var{{\gamma_{r+1}^\pp}}=  G(\alpha_r)=\alpha_{r+1}.
\end{equation}
Thus, the first and second moment of $\gamma_r^\pp$ are of the correct size. 
The proof that $\gamma_{r+1}^\pp$ converges to a normal distribution then follows the exact same lines as the proof in~\cite[Proposition 23]{caltagirone2017}. 
\end{proof}

We now study the total variation distance of a labeled Galton-Watson tree where the root is in community $+$ and a Galton-Watson tree where the root is in community $-$.
\begin{lemma}\label{lem:dtv}
	Let $P_+^{(t)}$ and $P_-^{(t)}$ denote the conditional distributions of $\mathcal{T}_t^{(s_0,L)}$ conditionally on the spin of $s_0$ being + and - respectively. Then,
	\begin{equation}
		\begin{aligned}[b]
		\lim_{d\to\infty}d_{TV}(P_+^{(t)},P_-^{(t)})& = \Exp{Q\left(\frac{-U_+-\alpha_t/2}{\sqrt{\alpha_t}}\right)}\\
		& + \Exp{Q\left(\frac{U_--\alpha_t/2}{\sqrt{\alpha_t}}\right)}-1
		\end{aligned}
	\end{equation}
\end{lemma}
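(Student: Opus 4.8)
The plan is to collapse the total variation distance between the two tree laws onto the one-dimensional log-likelihood-ratio statistic $\xi_t$, and then to read off the $d\to\infty$ limit directly from Lemma~\ref{lem:induction}.

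First I would identify $\xi_t$ with the log-likelihood ratio. Bayes' rule, exactly as in the derivation of~\eqref{eq:xirecurs}, gives $\frac{\Prob{\sigma_{s_0}=+\mid\mathcal{T}_t^{(s_0,L)}}}{\Prob{\sigma_{s_0}=-\mid\mathcal{T}_t^{(s_0,L)}}}=\frac{p}{1-p}\cdot\frac{\dd P_+^{(t)}}{\dd P_-^{(t)}}$, so writing $\Lambda_t=\dd P_+^{(t)}/\dd P_-^{(t)}$ for the likelihood ratio we get $\xi_t=w+\log\Lambda_t$, that is $\Lambda_t=\me^{\xi_t-w}$. Since $\Lambda_t$ is a sufficient statistic for the binary decision between $P_+^{(t)}$ and $P_-^{(t)}$, the total variation distance is unchanged under this reduction. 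Concretely, the Neyman--Pearson set $\{\Lambda_t\geq 1\}=\{\xi_t\geq w\}$ attains the supremum in $d_{TV}(P_+^{(t)},P_-^{(t)})=\sup_A\big(P_+^{(t)}(A)-P_-^{(t)}(A)\big)$, which yields the exact identity $d_{TV}(P_+^{(t)},P_-^{(t)})=\Prob{\xi_t^\pp\geq w}-\Prob{\xi_t^\mm\geq w}$, valid at every finite $d$.

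Then I would let $d\to\infty$ and substitute the limiting laws of Lemma~\ref{lem:induction}. Wasserstein convergence implies convergence in distribution, and the limits $U_\pm+w\pm\mathcal{N}(\alpha_t/2,\alpha_t)$ are absolutely continuous (convolutions with a nondegenerate Gaussian, since $\alpha_t>0$ for $t\geq 1$), so they place no mass at the threshold $\{w\}$ and the two tail probabilities converge. Conditioning on $U_+$ and using $\xi_t^\pp\dlim U_++w+\sqrt{\alpha_t}\,Z+\alpha_t/2$ gives $\Prob{\xi_t^\pp\geq w}\to\Exp{Q\big((-U_+-\alpha_t/2)/\sqrt{\alpha_t}\big)}$, while conditioning on $U_-$ and using $\xi_t^\mm\dlim U_-+w-\alpha_t/2+\sqrt{\alpha_t}\,Z$ gives $\Prob{\xi_t^\mm\geq w}\to\Exp{Q\big((\alpha_t/2-U_-)/\sqrt{\alpha_t}\big)}$. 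Finally I would rewrite the second term via $Q(-x)=1-Q(x)$ as $1-\Exp{Q\big((U_--\alpha_t/2)/\sqrt{\alpha_t}\big)}$; taking the difference reproduces the claimed expression exactly.

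The Gaussian-tail bookkeeping and the sign manipulation are routine. The point that must be handled carefully is the reduction step: one has to verify that $d_{TV}$ on the full labeled tree genuinely collapses to the scalar $\xi_t$ (sufficiency of the likelihood ratio), and that the correct threshold is $\{\xi_t\geq w\}$ rather than $\{\xi_t\geq 0\}$ (the latter being the decision rule of Algorithm~\ref{alg:BPlocal}); the shift by $w$ is precisely what cancels the additive $+w$ in the limit laws. The only genuine analytic subtlety is the interchange of the limit with the threshold probability, which I expect to be the main obstacle and which reduces to the absence of an atom of the limiting law at $w$, hence to $\alpha_t>0$.
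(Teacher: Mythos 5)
Your proof is correct, and at the top level it is the same argument as the paper's: reduce $d_{TV}(P_+^{(t)},P_-^{(t)})$ to a threshold test on the scalar statistic $\xi_t$, then substitute the Gaussian limit laws of Lemma~\ref{lem:induction} and express the two tail probabilities through $Q$. The one place where you genuinely diverge is the threshold, and there your version is the consistent one. The paper equates $d_{TV}$ with the tree success probability of the estimator of Algorithm~\ref{alg:BPlocal} and writes $d_{TV}(P_+^{(t)},P_-^{(t)})=\Prob{\xi_t^\pp\geq 0}+\Prob{\xi_t^\mm\leq 0}-1$, i.e., it thresholds at $0$; but the supremum defining the total variation distance is attained by the likelihood-ratio test at level $1$, which in the $\xi_t$ parametrization is the event $\{\xi_t\geq w\}$, exactly as you argue. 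Since the limit laws carry the additive $+w$, thresholding at $0$ would leave $w$ inside the $Q$-arguments and would not reproduce the right-hand side of the lemma when $p\neq 1/2$; thresholding at $w$ does, and this is the computation the paper implicitly performs in its final display. Your additional observation that the limiting laws are convolutions with a nondegenerate Gaussian, hence place no atom at $w$, so that weak convergence (implied by Wasserstein convergence) suffices to pass to the limit in the tail probabilities, fills in a step the paper leaves silent.
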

\begin{proof}
By~\eqref{eq:dtvpsucc}, the term on the left hand side is the same as the success probability of the estimator of Algorithm~\ref{alg:BPlocal} on a Galton-Watson tree. 
Using that $\xi_t^\pp$ and $\xi_t^\mm$ converge to normal distributions in the large graph limit, we then obtain for the total variation distance that
\begin{equation*}
\begin{aligned}[b]
&d_{TV}(P_+^{(t)},P_-^{(t)})\\
& =P_{succ}^{(GW)}(T_{BP}^t)=\Prob{\xi_t^\pp\geq 0}+\Prob{\xi_t^\mm\leq 0}-1\\
& = \Exp{Q\left(\frac{-U_+-\alpha_t/2}{\sqrt{\alpha_t}}\right)}+ \Exp{Q\left(\frac{U_--\alpha_t/2}{\sqrt{\alpha_t}}\right)}-1.
\end{aligned}
\end{equation*}
\end{proof}

Finally, we need to relate our results on the labeled Galton-Watson trees to the SBM. Denote by $G_t^{(s_0,L)}$ the subgraph of $G$ induced by all vertices at distance at most $t$ from vertex $s_0$. Let $\sigma_{G_t}$ denote the spins of all vertices in $G_t^{(s_0,L)}$. Similarly, let $\sigma_{\mathcal{T}_t}$ denote the spins of the vertices in $\mathcal{T}_t^{(s_0,L)}$. Then, the following Lemma can be proven analogously to~\cite{mossel2014}.
\begin{lemma}\label{lem:coupling}
	For $t=t(n)$ such that $a^t=n^{o(1)}$, there exists a coupling between $(G_t^{(s_0,L)},\sigma_{G_t})$ and $(\mathcal{T}_t^{(s_0,L)},\sigma_{{\mathcal{T}}_t})$ such that $(G_t^{(s_0,L)},\sigma_{G_t})=(\mathcal{T}_t^{(s_0,L)},\sigma_{{\mathcal{T}}_t})$ with high probability.
\end{lemma}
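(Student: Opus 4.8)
The plan is to construct the coupling via a simultaneous breadth-first exploration of the neighborhood of $s_0$ in both $(G,\sigma)$ and the tree $(\mathcal{T},\sigma)$, and to show that the two exploration processes agree as long as the explored portion of $G$ contains no cycles. First I would set up a vertex-revealing exploration: starting from $s_0$, whose spin is drawn with the same law $\Prob{\sigma_{s_0}=+}=p$ in both models and whose label is then drawn from $\mu$ or $\nu$ accordingly, I process vertices in BFS order up to depth $t$. When a vertex $v$ with spin $\sigma_v$ is processed, its as-yet-unexplored neighbors in $G$ are determined by independent coin flips: each unrevealed vertex of spin $+$ is a neighbor with probability $da/n$ or $db/n$ according to $\sigma_v$, and likewise for spin $-$. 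Because~\eqref{eq:avgdeg} holds and there are $pn(1+\op(1))$ vertices of spin $+$ and $(1-p)n(1+\op(1))$ of spin $-$, the number of new neighbors of each spin type is Binomial with parameters matching, in the $n\to\infty$ limit, the Poisson offspring law of $\mathcal{T}$ (a $+$-vertex has $\mathrm{Poisson}(pad)$ children of spin $+$ and $\mathrm{Poisson}((1-p)bd)$ of spin $-$, summing to $\mathrm{Poisson}(d)$); the labels are then attached from $\mu,\nu$ exactly as in the tree.

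The key steps, in order, would be: (i) bound the total number of explored vertices $V_t=\abs{G_t^{(s_0,L)}}$. Since the offspring is dominated by a branching process whose growth is governed by $a$ (all of $a,b,c$ are $O(1)$), the expected size of $G_t^{(s_0,L)}$ is $O(a^t)=n^{o(1)}$, and a Markov/union bound shows $V_t=n^{o(1)}$ w.h.p. (ii) Show the explored subgraph is a tree w.h.p.: a cycle requires an edge between two already-explored vertices, and by a union bound the probability of any such collision is at most $\binom{V_t}{2}\cdot O(d/n)=n^{o(1)}\cdot O(d/n)\to 0$, since $d$ is fixed while $n\to\infty$ in this lemma. (iii) Couple, vertex by vertex, the Binomial number of children of each (spin, label) type in $G$ with the corresponding Poisson number in $\mathcal{T}$. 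By a standard Le Cam/Poisson approximation bound the total variation distance between $\mathrm{Binomial}(m,\lambda/m)$ and $\mathrm{Poisson}(\lambda)$ is $O(\lambda^2/m)=O(d^2/n)$ per vertex; summing over the $n^{o(1)}$ explored vertices gives a cumulative coupling error of $n^{o(1)}\cdot O(d^2/n)\to 0$.

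Combining (i)--(iii), on the event that there are no collisions and every per-vertex Binomial-to-Poisson coupling succeeds---an event of probability $1-o(1)$---the revealed spinned and labeled subgraph of $G$ is isomorphic as a rooted labeled tree, together with its spins, to $\mathcal{T}_t^{(s_0,L)}$, which is exactly the claim. Since the labels and spins are generated from the same conditional laws in both models (the value $p$ at the root, the parent-conditional spin law, and $\mu,\nu$ for labels), no extra error arises from the label structure, and the argument reduces to the label-free coupling of~\cite{mossel2014}.

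I expect the main obstacle to be bookkeeping rather than conceptual: one must track that the residual counts of unexplored $+$- and $-$-vertices remain $pn(1+\op(1))$ and $(1-p)n(1+\op(1))$ throughout the exploration, so that the Binomial parameters do not drift, and that the accumulated Poisson-approximation and collision errors are \emph{simultaneously} $o(1)$ under the hypothesis $a^t=n^{o(1)}$. This is precisely where the sub-polynomial bound on the neighborhood size is used, and it is the step that pins down the admissible growth rate of $t(n)$.
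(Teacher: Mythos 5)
Your proposal is correct and matches the paper's proof in approach: the paper proves this lemma simply by deferring to the coupling argument of~\cite{mossel2014}, which is precisely the breadth-first exploration you describe, with Le Cam Poisson approximation per vertex, a union bound over collisions among the $n^{o(1)}$ explored vertices, and the observation that spins and labels are generated by identical conditional laws in both models. Your write-up fills in the details the paper omits (including the depletion bookkeeping), so it can stand as a self-contained proof of the lemma.
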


This lemma allows us to finish the proof of Theorem~\ref{thm:sbmlarge}.
\begin{proof}[Proof of Theorem~\ref{thm:sbmlarge}]
	On the event that $(G_t^{(s_0,L)},\sigma_{G_t})=(\mathcal{T}_t^{(s_0,L)},\sigma_{{\mathcal{T}}_t})$, the estimator of Algorithm~\ref{alg:BPlocal} is the same as the estimator based on the sign of $\xi_t$. Therefore, 
	\begin{equation*}
		\lim_{n\to\infty}P_{succ}(T^t_{BP})=P_{succ}^{(GW)}(T_{BP}^t),
	\end{equation*}
	so that 
	\begin{equation}
		\begin{aligned}[b]
		\lim_{d\to\infty}\lim_{n\to\infty}P_{succ}(T^t_{BP})& =\Exp{Q\left(\frac{-U_+-\alpha_t/2}{\sqrt{\alpha_t}}\right)}\\
		& + \Exp{Q\left(\frac{U_--\alpha_t/2}{\sqrt{\alpha_t}}\right)}-1,
		\end{aligned}
	\end{equation}
	which proves~\eqref{eq:succBP}. 
	
	To prove the second claim, we define estimator $\tilde{T}^t_{BP}$ on a tree of depth $t$ that does not only have access to the observed labels of the tree, but also to the vertex spins at depth $t$. Then, similar to~\cite[Lemma 3.9]{mossel2016}, we can show that this estimator performs at least as well as the optimal estimator on the Galton-Watson tree without revealed spins as $n\to\infty$. The analysis of this estimator follows the exact same lines as the analysis of Algorithm~\ref{alg:BPlocal}, except for the initial beliefs. Let $\zeta_r^\pp$ and $\zeta_r^\mm$ be defined similarly as $\xi_r^\pp$and $\xi_r^\mm$, but now given the true spins at depth $t$. 
	Define 
	\begin{align}
		\tilde{\alpha}_1&=\frac{\lambda}{p(1-p)},\\
		\tilde{\alpha}_r&=G(\tilde{\alpha}_{r-1}).
	\end{align}
	We now show that when $d\to\infty$
	\begin{align}\label{eq:zeta1p}
\zeta_1^\pp & \wlim U_++w+\mathcal{N}(\tilde{\alpha}_1/2,\tilde{\alpha}_1)\\
\zeta_1^\mm & \wlim U_-+w+\mathcal{N}(\tilde{\alpha}_1/2,\tilde{\alpha}_1).\label{eq:zeta1m}
	\end{align}
	Similar to~\eqref{eq:xiplus}, we can write
	\begin{equation}
		\zeta_1^\pp\overset{d}{=}U_++w+N_1\log(a/b)+N_2\log(b/c),
	\end{equation}
	with $N_1$ and $N_2$ Poisson random variables with parameters $dpa$ and $b(1-p)b$ respectively. Using~\eqref{eq:abc}, we obtain that $\log(a/b)=\tfrac{\varepsilon}{p}+\varepsilon^2\tfrac{2p-1}{2p^2}+O(\varepsilon^3)$ and $\log(c/d)=-\tfrac{\varepsilon}{1-p}+\varepsilon^2\tfrac{2p-1}{2(1-p)^2}+O(\varepsilon^3)$. Therefore,
	\begin{equation*}
	\begin{aligned}[b]
	& dpa\log(a/b)+d(1-p)b\log(c/d)\\
	& =\lambda \left(\frac{1}{p}+\frac{2p-1}{2p(1-p)}\right)+o(\varepsilon)\\
	& =\frac{\lambda}{2p(1-p)}+o(\varepsilon)=\tilde{\alpha}_1+o(\varepsilon).
	\end{aligned}
	\end{equation*}
	We can then use the same arguments as in Lemma~\ref{lem:xi1} to prove~\eqref{eq:zeta1p} and~\eqref{eq:zeta1m}. From there on, we can use Lemma~\ref{lem:induction}, with the value of $\alpha_1$ replaced by $\tilde{\alpha}_1$. Following the same lines of the proof of the analysis of $\xi$ then leads to
	\begin{equation*}
	\begin{aligned}[b]
	\lim_{d\to\infty}\lim_{n\to\infty}P_{succ}(\tilde{T}^t_{BP})& =\Exp{Q\left(\frac{-U_+-\tilde{\alpha}_t/2}{\sqrt{\tilde{\alpha}_t}}\right)}\\
	& + \Exp{Q\left(\frac{U_--\tilde{\alpha}_t/2}{\sqrt{\tilde{\alpha}_t}}\right)}-1.
	\end{aligned}
	\end{equation*}
	
	Similarly to~\cite[Lemma 7.4]{mossel2016}, we can show that $G(\alpha)$ is increasing and continuous. Therefore, if the function $G$ only has one fixed point, estimators $\tilde{T}^t_{BP}$ and $T^t_{BP}$ will provide the same accuracy as $t\to\infty$. 
\end{proof}

\section{Learning the model parameters}
Note that Algorithm~\ref{alg:BPlocal} uses knowledge of the parameters of the stochastic block model, $a,b$ and $c$, as well as the entire label distribution depending on the community spin, $\mu$ and $\nu$. In practice however, the parameters of the model that underlies some observed network are often unknown. When the parameters $a$, $b$ and $c$ of the stochastic block model are sufficiently large (larger than $\log(n)$), the communities can be recovered above the Kesten-Stigum threshold with a vanishing error fraction without knowledge of the model parameters by using a spectral method~\cite{lei2015,xu2014}. Let $\hat{\sigma}_i$ denote the estimated community spins. We can then estimate $\mu(\ell)$ by
\begin{equation}
\hat{\mu}(\ell)=\frac{\sum_{i:\hat{\sigma}_i=+}\ind{L_i=\ell}}{\sum_{i:\in [n]}\ind{\hat{\sigma}_i=+}}=\frac{\mu(\ell)np+o(1)}{np(1+o(1))}=\mu(\ell)+o(1),
\end{equation}
and $\nu(\ell)$ can be estimated with vanishing error as well. 

The spectral method described above only depends on the adjacency matrix. It is also possible to apply these spectral methods to a different matrix, that includes the adjacency matrix as well as the vertex labels. One example of such a matrix is the matrix $A+K$, where $A$ is the adjacency matrix, and $K$ is a kernel matrix based on the vertex labels. When every vertex has a number of vertex labels that grows in the network size $n$, a spectral method on the adjacency matrix with additive kernel is able to correctly identify a larger fraction of the spins correctly than a spectral algorithm based on the adjacency matrix only~\cite{binkiewicz2017,weng2016}. 

Another option is to study a multiplicative kernel, that is, a matrix of the form $A\circ K$, where $K$ is again some kernel matrix based on the vertex labels, and $\circ$ denotes element-wise multiplication. For example, we can take $K(i,j)=\ind{\ell_i=\ell_j}$. Then, $A\circ K$ is the adjacency matrix of the graph where all edges between vertices of different labels are removed. The remaining graph graph consists of several components, where the vertex label within each component is equal.  On average, the component corresponding to label $\ell$ contains $np\mu(\ell)$ vertices with spin $+$ and $n(1-p)\nu(\ell)$ of spin $-$. The average degree of the vertices with spin $+$ in this graph is therefore equal to $dp\mu(\ell)a+d(1-p)\nu(\ell)b$, whereas the average degree of the vertices with spin $-$ is equal to $dp\mu(\ell)b+d(1-p)\nu(\ell)c$. Using that $c=\tfrac{p}{1-p}(a-b)+b$ results in that the average degrees of vertices label $\ell$ with spin $+$ and $-$ are unequal if
\begin{equation}
p\mu(\ell)a\neq p\mu(\ell)b+p\nu(\ell)(a-b),
\end{equation}
so that they are unequal if $a\neq b$ and $\mu(\ell)\neq \nu(\ell)$. 
If this condition holds, then it is possible to infer the community spins of this connected component better than a random guess~\cite[Lemma 4]{caltagirone2017}. Then, in a regime where the average degree is at least logarithmic, we can get correlated reconstruction from the spectral technique of~\cite{xu2014} applied to the matrix with kernel multiplication without knowledge of the model parameters.
This method may even work underneath the Kesten-Stigum threshold, as in Example~\ref{ex:SBM}.
However, if the original SBM contained $K$ communities, this method finds $\abs{\mathcal{L}}K$ communities, one set of communities for each label. Thus, for each community $\sigma$ in the model, this method finds subcommunities $(\sigma,\ell)$ for all labels $\ell$, containing the vertices of community $\sigma$ with label $\ell$. 
One remaining question then is how to identify the different subcommunities that belong to the same original community. One possibility could be to identify the different parts of the planted communities based on estimates of the connection probabilities within the subcommunities and between the subcommunities. 

The kernel matrix $K(i,j)=\ind{\ell_i=\ell_j}$ is quite restrictive, but the above example shows that using multiplicative kernel matrices for community detection with vertex labels instead of additive kernel matrices seems promising. It would be interesting to investigate if other kernel matrices would result in a better performance. For example, the kernel matrix $K(i,j)=w_{\ell_i,\ell_j}$ for some weight matrix $w$ may result in better performance.

%
%

\section{Conclusion}
In this paper, we investigated a variant of the belief propagation (BP) algorithm for asymmetric stochastic block models with vertex labels. We find the probability that the belief propagation algorithm correctly classifies a vertex when the average degree of a vertex grows large. We show that in the asymmetric stochastic block model, the belief propagation algorithm initialized with beliefs based on the vertex labels may not always perform optimally. Belief propagation initialized with the true community memberships then results in a better partition. Whether it is possible to know that this partition is better than the partition obtained by initializing BP based on the vertex labels without knowing the true community partition, is an interesting direction for future research.

To determine the optimality or sub-optimality of BP in such situations, one possible approach could be to characterize directly the optimal accuracy that can be obtained by any feasible estimation procedure irrespective of its computational cost. Recent works have performed such a characterization in scenarios distinct from ours (eg~\cite{miolane2017,alaoui2018}, and references therein). It remains to be seen whether the approaches in these papers could be adapted to our present scenario.

Furthermore, the belief propagation algorithm uses knowledge of all parameters of the stochastic block model and the vertex label distribution. In general, such parameters are not known. Another fruitful direction for future research is therefore to investigate algorithms that do not need the model parameters as input, or algorithms that estimate the parameters given a network observation. For example, spectral methods including vertex covariates~\cite{binkiewicz2017}, methods based on maximum likelihood estimation~\cite{weng2016} or using belief propagation to find the parameters of the algorithm~\cite{decelle2011} may be interesting to investigate. 

Finally, it would be interesting to investigate the performance of a similar algorithm when more than two communities are present.

\bibliographystyle{abbrv}
\bibliography{referencesparis}

\end{document}